%% file: main.tex
\documentclass{article}

\usepackage{microtype}
\usepackage{graphicx}
\usepackage{subcaption}
\usepackage{booktabs} 

\usepackage{hyperref}
\usepackage{dsfont}

\usepackage[accepted]{icml2026}

\usepackage{amsmath}
\usepackage{amssymb}
\usepackage{mathtools}
\usepackage{amsthm}
\usepackage{thm-restate}
\usepackage{dsfont}
\usepackage{multirow, enumitem}
\usepackage{booktabs} 
\usepackage[table]{xcolor} 

\usepackage[capitalize,noabbrev]{cleveref}

\theoremstyle{plain}
\newtheorem{theorem}{Theorem}[section]

\newtheorem{lemma}[theorem]{Lemma}

\theoremstyle{definition}

\newtheorem{assumption}[theorem]{Assumption}
\theoremstyle{remark}
\newtheorem{remark}[theorem]{Remark}

\usepackage[textsize=tiny]{todonotes}

\newcommand{\bpi}{\boldsymbol{\pi}}
\newcommand{\PP}{\mathbb{P}}
\newcommand{\calZ}{\mathcal{Z}}
\newcommand{\calF}{\mathcal{F}}

\icmltitlerunning{
Smooth Multi-Policy Causal Effect Estimation in Longitudinal Settings
}

\begin{document}

\twocolumn[
  \icmltitle{Smooth Multi-Policy Causal Effect Estimation in Longitudinal Settings
  }



  \icmlsetsymbol{equal}{*}

  \begin{icmlauthorlist}
    \icmlauthor{Wenxin Chen}{cornell}
    \icmlauthor{Weishen Pan}{cornell}
    \icmlauthor{Kyra Gan}{cornell}
    \icmlauthor{Fei Wang}{cornell}
  \end{icmlauthorlist}

  \icmlaffiliation{cornell}{Cornell University, New York, US}

  \icmlcorrespondingauthor{Fei Wang}{few2001@med.cornell.edu}

  \icmlkeywords{Machine Learning, Longitudinal Causal Inference}

  \vskip 0.3in
]



\printAffiliationsAndNotice{}  

\begin{abstract}
\input{abstract}

\end{abstract}

\section{Introduction}
\vspace{-5pt}
\input{intro}

\subsection{Related Work}\label{sec:related_work}

\input{related_work}

\vspace{-5pt}
\section{Problem Setup and Preliminary}\label{sec:problem_setup}
\input{problem_setup}

\section{Limitation of Current Paradigm}\label{sec:paradigm}
\input{paradigm}
\section{Methods}\label{sec:methods}
\input{methods}
\section{Experiments}\label{sec:experiments}
\input{experiments}

\section{Conclusion}\label{sec:conclusion}
\input{conclusion}

\section*{Impact Statement}
This paper presents work whose goal is to advance the field of Machine Learning. There are many potential societal consequences of our work, none of which we feel must be specifically highlighted here.

\section*{Acknowledgment}
We thank Xintong Li for her contributions to MIMIC-III data processing and the reproduction of the MIMIC-Extract data extraction pipeline, and thank Zhenxing Xu for his contribution to MIMIC-IV data processing. We also thank Diyang Li for assistance with the theoretical derivations. This work utilized computing resources provided by AWS. We would also like to acknowledge the support from NSF 2212175, NIH RF1AG072449, RF1AG084178, R01AG080991, R01AG080624, R01AG076448, R01AG076234, and R01NS140142.

\bibliography{reference}
\bibliographystyle{icml2026}

\newpage
\appendix
\onecolumn
\input{appendix}


\end{document}

%% file: abstract.tex
Comparative evaluation of \emph{multiple} dynamic treatment policies is essential for healthcare and policy decisions, 
yet
conventional longitudinal causal inference methods estimate each in \emph{isolation}, 
preventing information sharing
across counterfactuals.
We demonstrate that this separate estimation paradigm induces a 
structurally uncontrolled
second-order bias, inflating finite-sample variance
even after standard debiasing with \emph{longitudinal targeted maximum likelihood estimation} (LTMLE). 
To address this, we propose a policy-aware reparameterization of \emph{Iterative Conditional Expectation} (ICE) Q-functions that enables joint estimation through shared representations. 
We implement this approach in the \textbf{Policy-Encoded Q Network (PEQ-Net)}, 
an architecture centered on a shared policy encoder. The encoder is trained using kernel mean embeddings, ensuring that the learned representation space reflects population-level policy dissimilarities.
After applying an LTMLE correction step, we prove this design 
imposes a structural constraint on the second-order remainder, thereby stabilizing finite-sample variance.
Experiments on semi-synthetic datasets demonstrate that PEQ-Net consistently outperforms existing ICE-based methods, 
achieving substantial reductions in root-mean-square error, particularly when evaluating closely related policies.

%% file: intro.tex
Clinical decision-making often requires comparing the effectiveness of \emph{multiple} treatment strategies from longitudinal data, where decisions adapt to a patient’s evolving state \cite{murphy2003optimal, chakraborty2013statistical}.
We are interested in estimating the outcome after multiple treatment decisions (e.g., 30-day mortality 
)~\cite{xu2023timing,shahn2020fluid,nguyen2010early}.
This task is complicated by the treatment–confounder feedback loop inherent to longitudinal settings: time-varying confounders influence future treatment decisions and are themselves influenced by prior treatments \cite{robins1986new, hernan2010causal}.

Existing methods can be categorized based on their confounding adjustment strategy. \emph{Treatment-model-based} approaches (e.g., IPTW, representation learning \cite{lim2018forecasting, bica2020estimating, melnychuk2022causal}) model treatment assignment to create a pseudo-population balanced on confounders.
However, they are prone to high variance, especially in long horizons, due to extreme propensity scores and the multiplicative accumulation of weights~\cite{farajtabar2018more,frauen2025model}. 
\emph{Outcome-model-based} approaches 
avoid modeling treatment assignment altogether. 
Forward G-computation \cite{taubman2009intervening, li2021g} simulates patient trajectories under a treatment policy, 
but it requires estimating the full set of covariate and outcome transition models, which can be sensitive to model misspecification in long-horizon and high-dimensional settings.
In contrast, \emph{Iterative Conditional Expectation} (ICE) G-computation \cite{snowden2011implementation} 
works backward: it first models the terminal outcome, then recursively steps back in time to model the expected value of future outcomes given the current history, decomposing the problem into a sequence of conditional expectations (Q-functions). By avoiding explicit modeling of the time-varying covariate and reducing reliance on long-horizon simulation, ICE is typically more robust~\cite{bibaut2019more,wen2021parametric} and has become a foundation for recent deep learning estimators~\cite{frauen2023estimating,shirakawa2024longitudinal}.
However, the standard ICE estimator is a \emph{plug-in} estimator: it is asymptotically inefficient and inconsistent under model misspecification. 
This limitation leads to the recent development of \emph{doubly robust} (DR) alternatives, e.g., LTMLE~\cite{van2011targeted,diaz2023nonparametric}.

Despite these advancements, a critical inefficiency remains for \emph{multi-policy evaluation}: whether using a plain ICE estimator or an efficient variant, existing implementations fit a separate sequence of regressions for each candidate policy \cite{seedat2022continuous, bica2020estimating}. 
They fail to exploit structural similarities across counterfactual trajectories, leading to redundant computation and inflated variance.

We resolve this by innovating on the ICE estimation step. We introduce a policy-aware reparameterization of the Q-functions in ICE, enabling information sharing across counterfactual trajectories.
To embed the policies, we leverage kernel mean embeddings in a reproducing kernel Hilbert space (RKHS) to define a population-level metric of policy dissimilarity, ensuring the shared representation respects the underlying data-generating process. We apply standard LTMLE debiasing on the resulting ICE estimator.
Our contributions are threefold:
\begin{itemize}[itemsep=0pt, topsep=2pt, partopsep=0pt, parsep=0pt, leftmargin=*]
\item We establish that separately estimated Q‑functions incur 
lack of structural constraint on
second‑order bias 
after TMLE debiasing. We introduce a policy‑aware reparameterization that preserves the identification of the target parameter while enabling shared learning (Section~\ref{subsec:Q_conditioning}). 
    \item We propose the \textbf{Policy-Encoded Q Network (PEQ-Net)}, a theoretically grounded architecture featuring a shared policy encoder that
    reflects maximum mean discrepancy. After applying LTMLE debiasing, we prove this design 
    imposes a structural constraint on the aforementioned bias of the treatment effect estimator, stabilizing the finite-sample variance (Theorem~\ref{theorem:smooth_remainer}).
    \item We demonstrate the effectiveness of our method through extensive experiments on semi-synthetic datasets derived from MIMIC, together with a real-world case study of vasopressor treatment in sepsis patients. On the semi-synthetic data, 
    our method substantially reduces root mean square error, particularly when evaluating closely related policies.
    Code is available at \url{https://github.com/Wenxin-Elmon-Chen/PEQ}.
\end{itemize}

%% file: related_work.tex
\textbf{Longitudinal Treatment Effect Estimation}
Classic \textit{treatment-model–based methods} use \emph{Inverse Probability of Treatment Weighting} (IPTW) \cite{rosenbaum1983central, rosenbaum1984reducing,robins2000marginal, hernan2006estimating, cole2008constructing}, its stabilized variant \cite{xu2010use, chesnaye2022introduction}, or matching \cite{austin2011introduction, stuart2010matching} to create balanced pseudo-populations. 
Recent deep learning methods adopted this principle and learn latent representations that predict outcomes while achieving a balance between treatment arms 
~\cite{lim2018forecasting, bica2020estimating, seedat2022continuous, melnychuk2022causal}.
These algorithms often perform poorly with finite samples in long horizons due to the compounding of small model errors in the treatment models, leading to high variance or residual bias \cite{bica2020estimating, petersen2012diagnosing, cole2008constructing}.

\textit{Outcome-model–based methods} 
directly model the outcome-generating process to estimate treatment effects using \emph{G-computation} \cite{robins1986new,snowden2011implementation} by standardizing over the natural course of confounder distributions. Popular forward G-computation includes~\citet{chiu2023evaluating, mcgrath2020gformula,li2021g, rein2024deep}. Recently, ICE G-computation gained attention due to improved stability from its recursive formulation that mitigates error propagation in time-varying covariates across time steps~\cite{oprescu2025gst, liu2025bayesian,frauen2025model,shirakawa2024longitudinal,frauen2023estimating}. 

\emph{Doubly robust variants}, combining treatment and outcome models, enhance robustness against model misspecification, improving finite-sample performance in high-dimensional, long-horizon settings. These include LTMLE \cite{van2011targeted,stitelman2012general,rosenblum2010targeted,diaz2023nonparametric} 
and recent deep learning variants~\cite{shirakawa2024longitudinal, frauen2023estimating,guo2024estimating,frauen2025model,hess2024igc}.

\textbf{Off-Policy Evaluation}\;\;
Our problem relates to off-policy evaluation (OPE) in reinforcement learning, which estimates a target policy’s mean reward using data from a different behavior policy.
Methodologically, OPE methods mirror the standard categories in longitudinal causal estimation and face the same core challenges:
1) \emph{importance sampling} (related to IPTW)
reweights trajectories 
induced by the behavior policy with that of the target policy \cite{mahmood2014weighted, thomas2016data, hanna2019importance, schlegel2019importance}, but often suffer from
high variance 
over long horizons~\cite{liu2018breaking}. 
Recent works exploit the Markov assumption 
to reduce variance~\cite{bossens2024low,xie2019towards,shen2021state,fujimoto2021deep},
but this is often violated in our setting where outcomes can depend on full history.
2) \textit{Direct method} (analogous to G-computation) model system dynamics to simulate counterfactual trajectories
\cite{le2019batch,duan2020minimax,voloshin1empirical}. \textit{DR methods} improve finite-sample efficiency and retain consistency when either component is correctly specified~\cite{bibaut2019more,pmlr-v48-jiang16,pmlr-v48-thomasa16,JMLR:v21:19-827,dudik2011doubly}.

Our work diverges from prior methods of longitudinal treatment effect estimation or OPE in RL by targeting the problem of estimation stability under the joint evaluation of multiple treatment policies. To achieve this, we introduce a modification to the standard Q-function (outcome model) training procedure and subsequently apply an LTMLE step to correct for the plug-in bias of the initial estimator.

%% file: problem_setup.tex
We observe $n$ i.i.d. longitudinal trajectories, each of length $\tau$, generated under a \emph{unknown behavior policy}, $\bpi^*$. For each trajectory and at each discrete time point $t\in\{1, \cdots, \tau\}$, we observe time-varying covariates $L_t$, followed by a binary treatment assignment $A_t\in \mathcal{A}=\{0,1\}$. A final outcome $Y$ is observed after time $\tau$.
Thus, one trajectory observed 
for a single unit is 
$O = (L_1,A_1,L_2,A_2,\dots,L_\tau, A_\tau, Y)\sim P_0,$
where $P_0$ is the unknown true distribution induced by the \emph{unknown behavior policy} that generated the data. (Any intermediate outcome observed after $A_t$ can be absorbed into $L_{t+1}$ without loss of generality.)

We use subscripts for
temporal sequences, e.g., $X_{t:\tau}=(X_t, X_{t+1},\dots, X_\tau)$. 
The history available before assigning treatment $A_t$ is  
$H_t = \{L_i, A_i\}_{i=1}^{t-1}\cup L_t$, taking values in a history space $\mathcal{H}_t$.
Both covariates and treatments may depend on the entire preceding history: $L_t$ may be influenced by $H_{t-1}\cup\{A_{t-1}\}$, while $A_t$ is determined by $H_t$.

Treatments in the observed data are assigned according to the unknown behavior policy $\bpi^* = (\pi_1^*, \cdots, \pi^*_\tau)$. Formally, each $\pi_t^{*}: \mathcal{H}_t \to (0,1)$ specifies the probability of receiving treatment given the history, i.e., $\pi_t^*(H_t)=P(A_t=1|H_t)$.
\textbf{Problem Setup}\;\;
We consider a set of $K$ dynamic counterfactual treatment policies,
$\{\bpi^{(1)}, \cdots, {\bpi}^{(K)}\}$,
that we wish to evaluate and compare.
For each policy 
$\boldsymbol{\pi}^{(i)}$, 
let 
\begin{equation}
    \dot{a}_{1:\tau}^{(i)}
= (\pi_1^{(i)}(H_1), \dots, \pi_{\tau}^{(i)}(H_\tau))\label{eq:def_dot_a}
\end{equation}
denote the sequence of counterfactual actions it would assign given the observed history, and
let 
$Y(\boldsymbol{\pi}^{(i)})$ 
be the corresponding potential outcome.
The \textbf{conditional average potential outcome (CAPO)} for policy $\bpi^{(i)}$ given baseline covariates $L_1$ is defined as
$\psi^{(i)} = \mathbb{E}[Y(\boldsymbol{\pi}^{(i)})|L_1].$

While each CAPO measures the expected outcome under a single policy, the ultimate quantity that informs a decision between two policies $\boldsymbol{\pi}^{(i)}$ and $\boldsymbol{\pi}^{(j)}$ is the difference between their CAPOs. Following the convention~\cite{frauen2023estimating,oprescu2025gst}, we call this difference in CAPOs the \textbf{conditional average treatment effect} (CATE):
$\mathbb{E}[Y(\boldsymbol{\pi}^{(i)})|L_1] - \mathbb{E}[Y(\boldsymbol{\pi}^{(j)})|L_1].$

We use $\bpi$ to denote a generic \emph{counterfactual} policy.
Throughout, we assume the standard longitudinal causal inference conditions to ensure the identifiability of our target parameters~\cite{shirakawa2024longitudinal}:
(1) Consistency: $Y(\boldsymbol{\pi}) = Y$ when $\bpi^*=\bpi$;
(2) Sequential Ignorability: $Y(\boldsymbol{\pi}) \perp A_t \mid H_t \quad \forall t \in \{1, \dots, \tau\}.$ ;
(3) Positivity: $\forall t \in \{1, \dots, \tau\},{\bpi(H_t)}/{\bpi^*(H_t)}<\infty$.

The rest of the section reviews
two key methodological building blocks. We first describe ICE G‑computation (Sec. \ref{subsec:ice_g_comp}), which constructs plug‑in estimates of dynamic treatment effects. We then review LTMLE (Sec. \ref{subsec:tmle}), a debiasing procedure that corrects the first‑order plug‑in bias of initial estimators, including those from ICE. 


\subsection{ICE G-computation}\label{subsec:ice_g_comp} 
ICE G-computation estimates the CAPO through backward recursion based on the G-computation formula \cite{robins2008estimation, bang2005doubly}, which expresses the CAPO as a nested sequence of conditional expectations evaluated under the counterfactual policy $\boldsymbol{\pi}$:
\begin{align}
\psi = \mathbb{E}_{\dot{a}_1\sim \pi_1(H_1)} \Big[ \mathbb{E}_{\dot{a}_2\sim \pi_2(H_2)} \Big[ \dots & \notag\\
\mathbb{E}_{\dot{a}_\tau\sim \pi_\tau(H_\tau)} [Y | A_\tau = \dot{a}_\tau, H_\tau] \dots | & A_1 = \dot{a}_1, L_1 \Big] \Big].
\end{align}

ICE estimates this quantity by breaking down the nested expectation into a series of recursive regression problems, proceeding backward from $t=\tau$ to $t=1$. Define $Q^{\mathrm{ICE}}_{\tau+1} = Y$. For each step $t = \tau, \dots, 1$, ICE recursively defines 
\begin{equation}
    Q_t^{\mathrm{ICE}}(A_t,H_t) = \mathbb{E}[Q^{\mathrm{ICE}}_{t+1}(\pi_t(H_{t+1}),H_{t+1})|A_t,H_t] \label{eq:temporal_equality}.
\end{equation}
The CAPO is then identified as
$\psi = \mathbb{E}[Q^{\mathrm{ICE}}_1(\pi_1(H_1), H_{1})].$

In practice, ICE is implemented by iterating the following two steps from $t = \tau$ to $1$:
\begin{enumerate}[itemsep=0pt, topsep=2pt, partopsep=0pt, parsep=0pt, leftmargin=*]
    \item Fit a regression model $Q^{\mathrm{ICE}}_t(A_t,H_t)$ to predict the pseudo-outcome $\hat{Q}^{\mathrm{ICE}}_{t+1}(\pi_{t+1}(h_{t+1}), h_{t+1})$. At $t=\tau$, the pseudo-outcome is $\hat{Q}^{\mathrm{ICE}}_{\tau+1} = Y$ by construction. 
    \item Generate the pseudo-outcome for the previous step $t-1$ by imputing the treatment according to the counterfactual policy, yielding $\hat{Q}^{\mathrm{ICE}}_t(\pi_t(h_t), h_t)$.
\end{enumerate}
The final estimate of $\psi$ is obtained by averaging $\hat{Q}^{\mathrm{ICE}}_1(\pi_1(h_1), h_1)$ over the sample. 

\begin{remark}
We note that the pseudo-outcome at each step is conditioned on the observed history $h_{t+1}$, not the history altered by $\bpi$. While this may appear counterintuitive, the pseudo-outcome represents the conditional mean outcome given current history, under the intervention that follows $\bpi$ from $t+1$ onward. Under the standard assumptions, this conditional expectation is identifiable from the observed data.
Further, while this vanilla formulation fits separate $Q^{\mathrm{ICE}}_t$ at each time step, modern approaches often employ shared autoregressive architectures, such as LSTM or Transformer, to learn a single time-aware $Q$-function across all time steps. This implementation stabilizes the entire training process and achieves better finite-sample performance. 
\end{remark}

\subsection{LTMLE}\label{subsec:tmle}


While ICE G‑computation provides a plug‑in estimator that is consistent under correct nuisance specification, its theoretical guarantees become difficult to maintain when using highly flexible function approximators such as transformers, which may overfit or introduce regularization bias. To safeguard against such inconsistencies,
we adopt LTMLE as a
subsequent debiasing step.

We consider a nonparametric model class for the nuisance functions~\cite{lin2026optimal,li2025targeted}.
LTMLE 
corrects first‑order bias by updating the initial nuisance estimates so that the influence function (IF) of the target parameter is set to zero in the sample. This yields an estimator that is consistent even when one of the nuisance models (outcome or treatment) is misspecified, and it improves the estimator’s asymptotic normality and efficiency.

Concretely, let the target parameter $\psi = \Psi(P)$ be a functional of the data-generating distribution $P$.
In practice, $\Psi(P)$ depends on nuisance components--in our case, the propensity score and outcome model at each time step.
A plug-in estimator $\Psi(\hat P)$ is obtained by substituting estimates of these nuisances.
The error of this estimator can be expanded via a von Mises expansion \cite{van2011targeted, cho2024kernel}:
\begin{align}
    & \hat\psi_n - \psi_0 =\PP_n D_\Psi^*(P_0) - \underbrace{\PP_n D_\Psi^* (\hat P_n)}_{\text{plug-in bias}} + \notag \\ &\underbrace{(\PP_n -P_0)[D_\Psi^*(\hat P_n)-D_\Psi^*(P_0)]}_{\text{empirical process term}}
+\underbrace{Rem(\hat P_n, P_0)}_{\text{second-order remainder}},\label{eq:von_mises}
\end{align}
where 
$D_{\Psi}^*(P)$ is the
IF for the target parameter $\psi$, and $\mathbb P_n$, $P_0$ the empirical and true expectation operators, respectively;  $Rem(\hat P_n, P_0)$ is the second-order remainder in the difference between $\hat P_n$ and $P_0$. At a high level, LTMLE achieves asymptotic efficiency by eliminating the \emph{plug-in bias term} through an iterative procedure, while relying on the empirical process and the second-order remainder terms to be
$o_p(n^{-1/2})$.
See prior work \citep{van2011targeted, cho2024kernel,li2025targeted, shirakawa2024longitudinal} for a detailed description of this procedure.

The \emph{empirical process term} is typically controlled through regularity conditions on the nuisance estimators, such as Donsker-type assumptions or sample-splitting procedures ~\cite{kosorok2008introduction,van2011targeted,chernozhukov2018double,newey2018crossfittingfastremainderrates}

The second-order remainder, which
captures higher-order interactions between nuisance estimation errors (such as those of the outcome regression and treatment mechanism) must satisfy
$\sum_{t=1}^\tau \|\hat{\pi}_t - \pi_t^*\|\|\hat{Q}_t - Q_t^*\|=o_p(n^{-1/2})$.
This condition becomes substantially 
more demanding as $\tau$ grows, because ICE‑based 
Q-estimation propagates errors backward through the recursion. When constructing a CATE estimator as the difference of two CAPO estimates, the remainder bias compounds further, making stable control of this term critical for finite‑sample performance. We formalize this statement in Sec.~\ref{sec:paradigm}. 

Our theoretical analysis therefore focuses on the second-order remainder, examining how refinements to ICE G‑computation can achieve better‑controlled remainder terms and improved robustness. A central challenge arises when estimating CAPO contrasts across multiple treatment policies: to ensure sufficient estimation accuracy for each policy, we must explicitly incorporate a notion of policy “similarity” into the estimation process itself. This allows information to be shared across policies while preserving modeling flexibility. We formalize this approach and its benefits for remainder control in Section \ref{sec:methods}.

%% file: paradigm.tex


From the recursive definition in Eq.~\eqref{eq:temporal_equality},
note that $Q^{\mathrm{ICE}}_t$ depends on the policy only through its future tail $\pi_{t+1:\tau}$ because it is defined in terms of $Q^{\mathrm{ICE}}_{t+1}$, which in turn depends on $\pi_{t+2:\tau}$.
Consequently, \emph{two policies that coincide from time $t+1$ onward should induce the same  $Q^{\mathrm{ICE}}_t$ function.} 


However, existing frameworks generally do not follow this behavior. To estimate the CATE between two policies, these methods (e.g., DeepACE, DeepLTMLE)  adopt a separate estimation paradigm:
For two policies $\bpi^{(i)}$ and $\bpi^{(i)}$, two full sets of nuisance models are trained independently, which include the outcome and propensity score models at all time steps, and then CATE is obtained by differencing the resulting CAPO estimates. Consequently, the models minimize two different objectives, making the $Q$-functions implicitly depend on the entire policy trajectories, not only the future tail. (The propensity score model $\hat{\pi}$ also implicitly depends on the counterfactual because of the multi-task learning paradigm.) Such a decoupling of nuisance estimation across policies induces unnecessary error in the estimation of $Q$ (and $\hat{\pi}$), inflating bias in CATE estimation.
The following lemma (proof in Appendix~\ref{appx:proof_separate_model}) formalizes
the statistical consequence of this decoupling by an extreme condition: Even when two counterfactual policies behave identically, the contrast of their second-order remainders,
$\mathrm{Rem}^{(i),(j)} = \mathrm{Rem}^{(i)} - \mathrm{Rem}^{(j)}$, generally does
not cancel. This lack of structural cancellation implies that CATE
may exhibit irregular behavior driven by nuisance estimation, rather than by
meaningful policy differences.


\begin{restatable}
{lemma}{LemmaDiscontinuousRemainder} \label{lemma:discontinuous_remainder}
Consider the separate estimation paradigm where nuisance models are subject to independent random initialization. Even when two policies behave exactly the same at all steps, the second-order remainder in CATE generally does not cancel, i.e.,
 $\mathrm{Rem}^{(i),(j)} \neq 0$ almost surely. 
 \end{restatable}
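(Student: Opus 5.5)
We begin by writing out the second-order remainder of the LTMLE/ICE estimator for a single policy in its standard longitudinal form. For policy $\bpi^{(i)}$ with separately fitted nuisances $(\hat\pi^{(i)}_t,\hat Q^{(i)}_t)_{t=1}^\tau$, the von Mises expansion in Eq.~\eqref{eq:von_mises} yields a remainder of the product form
\begin{equation*}
\mathrm{Rem}^{(i)} = \sum_{t=1}^{\tau} P_0\Big[ w^{(i)}_{t}\,\big(\tfrac{\pi^*_t-\hat\pi^{(i)}_t}{\hat\pi^{(i)}_t}\big)\,\big(\hat Q^{(i)}_t - Q^{(i)}_{t}\big)\Big] + (\text{cross-time products}),
\end{equation*}
where $w^{(i)}_t$ is a cumulative ratio of policy to estimated propensity up to $t-1$. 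The exact form is not essential. What matters is that $\mathrm{Rem}^{(i)}$ is a deterministic functional $R(\hat\eta^{(i)};\bpi^{(i)},P_0)$ of the fitted nuisance $\hat\eta^{(i)}$ and the policy. This functional is bilinear in the propensity and outcome errors, and it is nonzero whenever both errors are nonzero on a common set of positive $P_0$-mass with a nonvanishing integrand.

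Next we use the hypothesis that the two policies coincide, $\bpi^{(i)}=\bpi^{(j)}=\bpi$. Then $Q^{(i)}_t=Q^{(j)}_t$, the weights $w$ agree, and so $\mathrm{Rem}^{(i),(j)} = R(\hat\eta^{(i)};\bpi,P_0)-R(\hat\eta^{(j)};\bpi,P_0)$. The only source of difference is the fitted nuisances. Under the separate paradigm, $\hat\eta^{(i)}$ and $\hat\eta^{(j)}$ are the outputs of the same (data-dependent) training map applied to independent random initializations $\theta^{(i)},\theta^{(j)}$. Conditionally on the data, they are therefore i.i.d. draws of $\hat\eta(\theta)$, with $\theta$ drawn from an absolutely continuous initialization law on $\mathbb{R}^p$.

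The claim then reduces to showing that $g(\theta)=R(\hat\eta(\theta);\bpi,P_0)$ is a nonconstant function of $\theta$, and that its law has no atoms. If this holds, then $g(\theta^{(i)})-g(\theta^{(j)})$ has an atomless distribution and is nonzero almost surely. Concretely, $P\big(g(\theta^{(i)})=g(\theta^{(j)})\big)=\int P\big(g(\theta^{(j)})=c\big)\,d\mu_g(c)=0$. We would argue as follows. For networks trained by a finite number of gradient steps with smooth activations, $\theta\mapsto\hat\eta(\theta)$ is real-analytic, and $R$ is polynomial (bilinear) in $\hat\eta$ given $P_0$. Hence $g$ is real-analytic on $\mathbb{R}^p$. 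A nonconstant real-analytic function has level sets of Lebesgue measure zero, so $g(\theta)$ is atomless under an absolutely continuous $\theta$. This is the sense of ``generally'' in the lemma: it holds outside the degenerate case where $g$ is constant.

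The main obstacle is establishing non-constancy of $g$, that is, ruling out the case where the training map collapses all initializations to the same remainder value. We would handle this by exhibiting a single perturbation direction. Pick $\theta_0$ and a direction $v$ that moves the final-layer bias of $\hat Q_t$ (leaving $\hat\pi$ fixed) on a region where $\pi^*_t\neq\hat\pi_t$. The directional derivative of $R$ is then $P_0[w_t(\pi^*_t-\hat\pi_t)/\hat\pi_t]\neq 0$ for generic data, because finitely many gradient steps do not drive propensity errors to zero exactly. This gives non-constancy, and the analyticity argument then finishes the proof. If the smoothness or analyticity of the training map is deemed too strong an assumption, we would instead state it as an explicit assumption: the law of $\mathrm{Rem}^{(i)}$ induced by initialization is nonatomic conditional on the data. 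This is consistent with the lemma's qualifier ``generally.''
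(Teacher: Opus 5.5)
Your proof is correct in outline but takes a genuinely different route from the paper's.

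\textbf{How the paper argues.} The paper uses the D\'iaz et al.\ representation $R_s^{(k)}=C_s^{(k)}[\eta_s^{(k)}/\hat\eta_s^{[k]}-\eta_s^{(k)}/\eta_s^*][\hat Q_s^{(k)}-Q_s^{(k),*}]$. It splits $R_s^{(i)}-R_s^{(j)}$ algebraically into three terms:
a weight-mismatch term $(W_s^{(i)}-W_s^{(j)})\xi_{G,s}^{[i]}\xi_{Q,s}^{(i)}$;
a propensity-run term $W_s^{(j)}\xi_{Q,s}^{(i)}(\xi_{G,s}^{[i]}-\xi_{G,s}^{[j]})$;
and an outcome-run term $W_s^{(j)}\xi_{G,s}^{[j]}(\xi_{Q,s}^{(i)}-\xi_{Q,s}^{(j)})$.
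It argues that each term is nonzero because independently initialized runs differ. It then simply asserts that no structural mechanism forces their sum to cancel.

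\textbf{What your route buys and costs.} You instead treat both remainders as one fixed functional evaluated at two conditionally i.i.d.\ initializations, and obtain a.s.\ non-cancellation from atomlessness plus Fubini. This supplies exactly the step the paper leaves informal: nonzero summands, or a nonzero integrand, can still produce a zero expectation. It also implies the paper's unproved premise that the two runs' fitted nuisances differ a.s.

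What the paper's decomposition buys instead is mechanism. It is reused in the proof of Theorem~\ref{theorem:smooth_remainer}, where a shared propensity model kills the propensity-run term and the other two terms are bounded by MMD. Your black-box argument says nothing about which terms sharing removes.

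Your route also requires regularity the paper never assumes: analytic activations, training that is deterministic and finite-step given the seed, and piecewise handling of the $\ell_1$-penalized targeting step.

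\textbf{Two details need repair.}
First, the remainder is rational, not bilinear, in the propensity estimate, since it contains $\prod_r 1/\hat\eta_r$. Analyticity still survives because sigmoid outputs lie in $(0,1)$.

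Second, and more importantly, your non-constancy step conflates the initialization with the trained parameters. Perturbing the $Q$-head bias at initialization does not merely shift the trained $\hat Q_t$ with $\hat\pi$ held fixed. The backbone is shared with the $G$-head, one head serves all $t$, and the ICE targets are produced by the network itself.

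A fix: write $g=\tilde R\circ\Phi$, where $\Phi$ maps the initialization to the trained parameters and $\tilde R$ is the remainder as a function of trained parameters. Suppose $\Phi$ has an invertible Jacobian at some point; this holds, e.g., for a small enough step size, since each gradient step then has Jacobian $I-\eta\nabla^2 L$ close to the identity along the finite trajectory. Then the image of $\Phi$ contains an open set. The analytic function $\tilde R$ cannot be constant on that open set as long as your bias-shift derivative, summed over the time steps that share the head, is nonzero somewhere. Hence $g$ is non-constant.

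Without an argument of this kind, your fallback of simply assuming a nonatomic law for the remainder would make the lemma nearly tautological.
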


%% file: methods.tex
\begin{figure*}[t]
    \centering
    \includegraphics[width=0.72\linewidth]{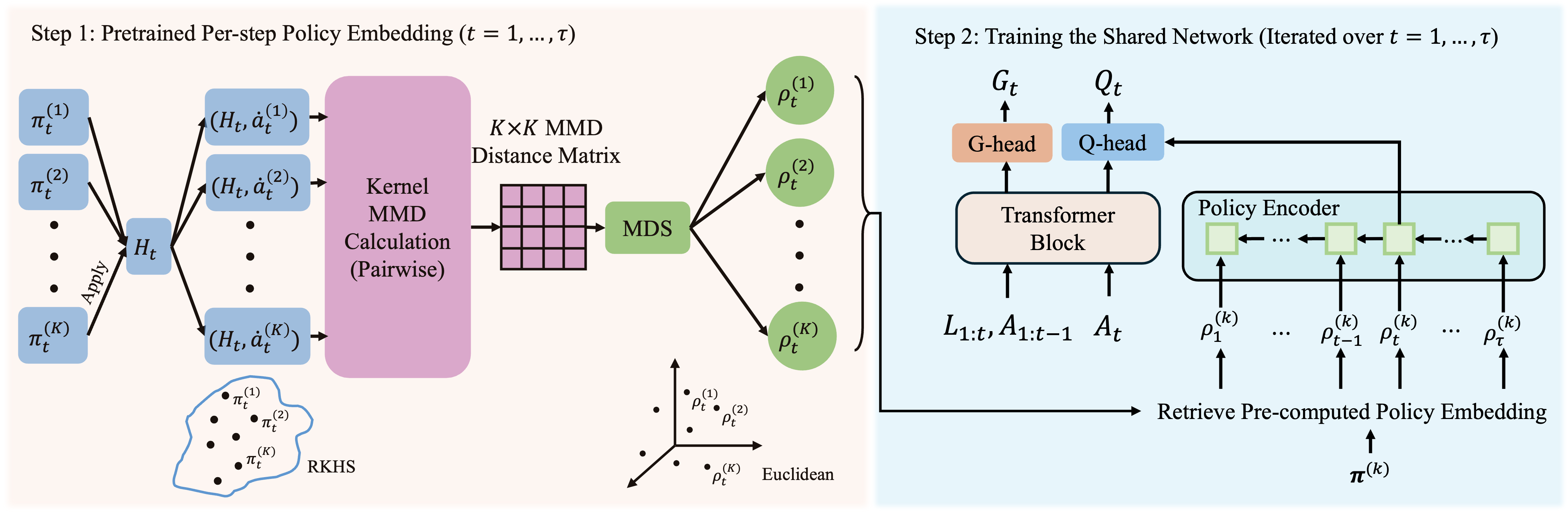}
    \caption{Illustration of the PEQ-Net. Step 1 computes per-step policy embeddings using pairwise MMD distances followed by MDS. Step 2 aggregates the resulting embeddings with a policy encoder and conditions the shared $Q$-functions on the encoded policy representation.
}
    \label{fig:policy_encoder_diagram}
\end{figure*}

This section introduces our method. We first leverage a key structural property of ICE G-computation to reparameterize $Q$-function such that it enables model sharing across policies (Section~\ref{subsec:Q_conditioning}). We then formalize policy distance on RKHS and construct policy representations that preserve these distances (Section~\ref{subsec:rkhs}). Finally, we combine these components and propose our model architecture (Section~\ref{subsec:overall_algo}).

\subsection{Reparametrization of Q-Function}\label{subsec:Q_conditioning}

Existing methods independently fit separate outcome regressions for each counterfactual policy, which violates a crucial requirement for smooth policy contrasts. To address this, we propose to explicitly parameterize the outcome regression by the future policy tail, shifting this dependency from an implicit to an explicit representation.


Specifically, we redefine the $Q$-function at time $t$ to condition not only on the current action and history, but also on the remaining policy sequence $\pi_{t+1:\tau}$:
\begin{align}
    &Q_t(A_t, H_t,\pi_{t+1:\tau}) = \notag \\
    &\mathbb{E}[Q_{t+1}(\pi_{t+1}(H_{t+1}), H_{t+1},\pi_{t+2:\tau})|A_t,H_t,\pi_{t+1:\tau}].\label{eq:modified_temporal_equality}
\end{align}
This formulation allows a single model to represent $Q_t$ across multiple counterfactual policies, because the policy tail $\pi_{t+1:\tau}$ is now an explicit input.

We show that this reparameterization preserves identification of the target estimand (Proof in Appendix~\ref{appx:proof_valid_conditioning}).

\begin{restatable}[Identification of reparameterized $Q$-functions]{lemma}{LemmaValidConditioning}
\label{lemma:valid_conditioning}
For any policy $\boldsymbol{\pi}^{(k)}$, the reparameterized outcome regressions
$\{Q_t(\cdot,\cdot,\pi^{(k)}_{t+1:\tau})\}_{t=1}^\tau$
defined in Eq.~\eqref{eq:modified_temporal_equality} identify the same CAPO as standard ICE G-computation. In particular,
\begin{align*}
    \mathbb{E}\left[Q^{\mathrm{ICE}}_1(\pi^{(k)}_1(H_1), H_1)\right]
    =\mathbb{E}\left[Q_1(\pi^{(k)}_1(H_1), H_1, \pi^{(k)}_{2:\tau})\right].
\end{align*}
\end{restatable}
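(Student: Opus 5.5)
We argue by backward induction on $t$, from $t=\tau$ down to $t=1$. Fix $\bpi^{(k)}$ and write $\bpi=\bpi^{(k)}$. The goal is the pointwise claim
\begin{equation*}
Q_t(a_t,h_t,\pi_{t+1:\tau}) = Q^{\mathrm{ICE}}_t(a_t,h_t)\quad\text{for all } (a_t,h_t) \text{ and } t=\tau,\dots,1,
\end{equation*}
where $Q^{\mathrm{ICE}}_t$ is the standard ICE function of Eq.~\eqref{eq:temporal_equality} built for this same policy $\bpi$. Both sides of Eq.~\eqref{eq:modified_temporal_equality} use the convention $Q_{\tau+1}=Q^{\mathrm{ICE}}_{\tau+1}=Y$. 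Once the claim holds at $t=1$, we evaluate it at $a_1=\pi_1(H_1)$ and take expectations, which gives the displayed identity in Lemma~\ref{lemma:valid_conditioning}.

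The first step fixes the meaning of conditioning on $\pi_{t+1:\tau}$. The policy tail is not a random variable generated by $P_0$. It is a fixed, deterministic index chosen by the analyst, and it is independent of $O$. Conditioning on it therefore only selects which recursion is computed. Formally, for any integrable $f$ and any fixed $\pi_{t+1:\tau}$,
\begin{equation*}
\mathbb{E}[f(O)\mid A_t,H_t,\pi_{t+1:\tau}]=\mathbb{E}[f(O)\mid A_t,H_t].
\end{equation*}
We state this explicitly because it is the only substantive point: the reparameterization adds an index, not information. If the policy were instead treated as a random label $K$ drawn independently of the data, as happens when a network is trained jointly over policies, the same identity follows from $K\perp O$.

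\textbf{Base case ($t=\tau$).} Here the tail $\pi_{\tau+1:\tau}$ is empty, so Eq.~\eqref{eq:modified_temporal_equality} gives $Q_\tau(A_\tau,H_\tau)=\mathbb{E}[Y\mid A_\tau,H_\tau]=Q^{\mathrm{ICE}}_\tau(A_\tau,H_\tau)$.

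\textbf{Inductive step.} Assume the claim holds at $t+1$. Then the integrand satisfies $Q_{t+1}(\pi_{t+1}(H_{t+1}),H_{t+1},\pi_{t+2:\tau})=Q^{\mathrm{ICE}}_{t+1}(\pi_{t+1}(H_{t+1}),H_{t+1})$ pointwise in $H_{t+1}$. The policy arguments are the same functions on both sides, so this is a composition of equal functions. Substituting into Eq.~\eqref{eq:modified_temporal_equality} and applying the conditioning identity above yields
\begin{equation*}
Q_t(A_t,H_t,\pi_{t+1:\tau})=\mathbb{E}[Q^{\mathrm{ICE}}_{t+1}(\pi_{t+1}(H_{t+1}),H_{t+1})\mid A_t,H_t]=Q^{\mathrm{ICE}}_t(A_t,H_t).
\end{equation*}

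The expected obstacle is not a calculation but well-definedness.
\begin{itemize}
\item \emph{Stochastic policies.} If $\pi_{t+1}$ is stochastic, evaluating at $\pi_{t+1}(H_{t+1})$ should be read as integrating over $\dot a_{t+1}\sim\pi_{t+1}(H_{t+1})$, matching the nested G-formula. The induction is unchanged, because equality of the integrands holds for every action value.
\item \emph{Conditioning on measure-zero histories.} Positivity (assumption (3)) ensures that the conditional expectations are well-defined $P_0$-a.e. at the counterfactual actions. The equalities should therefore be stated a.e., which suffices after taking expectations.
\end{itemize}
As a consequence, the reparameterized $Q_1$ also inherits identification of the CAPO $\psi$ from the standard ICE result under consistency, sequential ignorability and positivity.
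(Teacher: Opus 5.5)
Your proposal is correct and follows essentially the same backward-induction argument as the paper: agreement at $t=\tau$, substitution of the inductive hypothesis into Eq.~\eqref{eq:modified_temporal_equality}, and then dropping the policy tail from the conditioning set because it carries no information about $O$. Your justification of that last step, that the tail is a fixed index or an independent label, is more precise than the paper's appeal to the determinism of $\pi^{(k)}$, and it lets you cover stochastic policies, which the paper's proof excludes by fixing a deterministic policy.
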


\begin{algorithm}[b!]
\caption{RKHS-Based Policy Embedding}\label{algo:policy_embedding}
\begin{algorithmic}[1]
    \STATE \textbf{Input}: Dataset $\{O_i\}_{i=1}^n$, policies $\{\boldsymbol{\pi}^{(k)}\}_{k=1}^K$
    \STATE \textbf{Output}: Policy embeddings $\{\{\rho_t^{(k)}\}_{k=1}^K\}_{t=1}^\tau$
    \FOR{$t=1$ to $\tau$}
        \STATE Initialize $K\times K$ distance matrix $D_t$
        \FOR{$k=1$ to $K$}
            \STATE Apply $\pi_t^{(k)}$ to $H_t$ to obtain $Z_t^{(k)}=(H_t,\dot a_t^{(k)})$
        \ENDFOR
        \FOR{$i,j\in[K]$}
            \STATE $D_t[i,j] \leftarrow \|\mu_t^{(i)}-\mu_t^{(j)}\|_{\mathcal{F}_t}$
        \ENDFOR
        \STATE Normalize $D_t$ by $\max(D_t)$
        \STATE Apply metric MDS to $D_t$ to obtain $\{\rho_t^{(k)}\}_{k=1}^K$ s.t. $\|\rho_t^{(i)} - \rho_t^{(j)}\| \approx \|\mu_t^{(i)} - \mu_t^{(j)}\|$
    \ENDFOR
\end{algorithmic}
\end{algorithm}


\subsection{RKHS-Based Policy Embedding}\label{subsec:rkhs}

To enable joint estimation, we represent each policy (function) by a fixed‑dimensional vector that captures its behavior relative to other policies. The core idea is to embed policies into a \emph{continuous metric space} where distances reflect the discrepancy between the distributions of actions they would generate given the observed patient histories. This geometric structure allows the estimation model to share statistical strength across policies that behave similarly, thereby stabilizing the learning process.


Formally, we construct this embedding using the framework of reproducing kernel Hilbert spaces (RKHS). 
Let $h_t^{\{\ell\}}$ denote the observed history at time $t$ for individual $\ell$.
At each time step $t$, for a policy $\pi_t$, we apply it to all \emph{observed covariates} $\{h_t^{\{\ell\}}\}_{\ell\in[n]}$ to generate a corresponding set of actions. 
Let $z_t^{\{\ell\}}$ be the history action pair that is induced by policy $\pi_t$ on individual $\ell$, i.e., $z_t^{\{\ell\}} = \big(h_t^{\{\ell\}}, \pi_t(h_t^{\{\ell\}})\big)$.
The pairs 
$
\big(z_t^{\{\ell\}}\big)_{ \ell\in[n]}$
form an empirical distribution over the space $\calZ_t = \mathcal{H}_t \times \mathcal{A}$
that characterizes how the policy behaves on the available data.
To quantify similarity between two policies, we measure the distance between their induced empirical distributions using the \textbf{maximum mean discrepancy} (MMD)~\cite{JMLR:v13:gretton12a,sejdinovic2013equivalence,chwialkowski2015fast,gretton2012optimal}. This kernel‑based distance equips the space of policies with a metric that respects distributional shape, providing a theoretically well‑founded notion of policy dissimilarity.

Since the history $h_t$ grows in dimension with $t$, we define a \emph{time-specific characteristic kernel} $\kappa_t: \calZ_t \times \calZ_t \rightarrow \mathbb{R}$ for each step (we use Gaussian kernels with bandwidths adapted to the dimension of $\calZ_t$). Let $\calF_t$ be the RKHS induced by $\kappa_t$. The mean embedding of the policy $\pi_t^{(i)}$, $\mu_t^{(i)}$, is defined as 
$$\mu_t^{(i)} = \frac{1}{n}\sum_{\ell=1}^n \kappa_t \left(\;\cdot\;, \left(z_t\right)\right)\in \calF_t.$$
The squared MMD between policies $\pi_t^{(i)}$ and $\pi_t^{(j)}$ is then the squared RKHS distance between their embeddings 
$$\mathrm{MMD}^2(\pi_t^{(i)}, \pi_t^{(j)})=\|\mu_t^{(i)}-\mu_t^{(j)}\|_{\mathcal{F}_t}.$$

Given a set of $K$ counterfactual policies, we compute, for each time $t$, all pairwise MMD distances using $\kappa_t$, resulting in a symmetric distance matrix $D_t\in\mathbb{R}^{K\times K}$ with entries 
$D_t[i,j] = \mathrm{MMD}^2(\pi_t^{(i)}, \pi_t^{(j)})$.

To obtain a fixed-dimensional vector representation suitable for neural-network input, we apply \textbf{metric multidimensional scaling} (MDS)~\cite{kruskal1964multidimensional} to each matrix $D_t$.
Let $d$ be the dimension of the low-dimensional representation, then MDS finds $\rho^{(i)}_t\in \mathbb{R}^d$ that
preserve the original MMD geometry as closely as possible: 
\[
\|\rho_t^{(i)}-\rho_t^{(j)}\|_2 \approx
D_t[i,j].
\]
Formally, MDS solves the following optimization problem 
\begin{equation}
\min_{\rho_t^{(1)},\dots,\rho_t^{(K)} \in \mathbb{R}^d} \,
\sum_{i < j} \bigl( D_t[i,j] - \|\rho_t^{(i)} - \rho_t^{(j)}\|_2 \bigr)^2 .
\label{eq:mds_stress}
\end{equation}
This optimization problem is solved by the SMACOF algorithms~\cite{borg2005modern}.

These precomputed policy embeddings $\{\rho_t^{(i)}\}$ serve as structured, continuous representations of each policy’s behavioral profile (Appendix~\ref{appx:additional_embedding} provides empirical analyses of the embeddings). By supplying them as inputs to our architecture (Sec.~\ref{subsec:overall_algo}), the model can explicitly leverage the metric relationships among policies, enabling stable joint estimation through shared representation learning.

\begin{algorithm}[t]
    \caption{PEQ-Net Training}\label{algo:g_comp_encoder}
    \begin{algorithmic}[1]
        \STATE \textbf{Input:} Training Data $\{O_i\}_{i=1}^n$; counterfactual policies $\{\boldsymbol{\pi}^{(k)}\}_{k=1}^K$ with corresponding treatments $\{\dot{a}^{(k)}_{1:\tau}\}_{k=1}^K$ defined in Eq~\eqref{eq:def_dot_a}; embedding $\{\{\rho_t^{(k)}\}_{k=1}^K\}_{t=1}^\tau$; target network coefficient $\beta$.
        \STATE \textbf{Output:} parameter $\theta$

        \STATE Initialize model blocks $q,g,\phi,\tilde{\rho}$, with parameters $\theta$
        \FOR{batch=1,\dots,B}
        \STATE $Q_{\tau+1} = Y$
        \FOR{$t = 1,\dots, \tau$}
            \FOR{$k = 1,\dots, K$}
                \STATE G-comp target generation using target network $\theta'$:
                $
                    \hat{Q}^{(k)}_{t+1}(\dot{a}_{t+1}^{(k)},H_{t+1}) = q_{\theta'}(\phi_{\theta'}(H_{t+1}, \dot{a}_{t+1}^{(k)})), \tilde{\rho}_{\theta'}(\rho_{t+2:\tau}^{(k)}))
                    $
                \STATE G-comp prediction using current network $\theta$:
                $Q_{t}^{(k)}(A_t,H_t) = q_{\theta}(\phi_{\theta}(H_t,A_t);\tilde{\rho}_{\theta}(\rho_{t+1:\tau}^{(k)}))$\\
                $\mathcal{L}_t^{Q,(k)} = \mathcal{L}_{mse}(Q_{t}^{(k)}(A_t,H_t),  \hat{Q}_{t+1}^{(k)}(\dot{a}_{t+1}^{(k)},H_{t+1}))$
                \ENDFOR
                \STATE $G(H_t) = g(\phi(H_t))$
                \STATE $\mathcal{L}_t^G = \mathcal{L}_{bce}(G(H_t), A_t)$
            \ENDFOR
        \STATE $\mathcal{L}(\theta) = \sum_{t=1}^\tau \left[\mathcal{L}_t^{G} + \sum_{k=1}^K \mathcal{L}_t^{Q,(k)}\right]$
        \STATE $\theta \gets \theta - \eta\,\nabla_{\theta} L$
        \STATE $\theta' \leftarrow \beta \cdot \theta + (1-\beta) \cdot\theta'$
        \ENDFOR
    \end{algorithmic}
\end{algorithm}

\subsection{Our Architecture: PEQ-Net}\label{subsec:overall_algo}
Building on the policy embeddings described above, we now detail how these structured representations enable joint learning across policies.
Algorithm~\ref{algo:g_comp_encoder} summarizes the training procedure of PEQ-Net. 
The core design trains a \textbf{single shared set of outcome and propensity models} across all $K$ counterfactual policies, and the outcome regression is explicitly conditioned on the \textbf{policy-tail embedding} $\tilde{\rho}(\rho_{t+1}^{(i)})$. 



\textbf{Architecture}\;\;
We adopt the decoder-only Transformer architecture from DeepLTMLE as the backbone~\cite{shirakawa2024longitudinal}. 
Inputs are first passed through a type-embedding layer distinguishing covariates $L$, treatment $A$, and outcome $Y$, and combined with positional encodings. 
The resulting sequence is processed by a Transformer block $\phi(H_t,A_t)$, whose output feeds into a $G$-head for propensity estimation and a $Q$-head for outcome regression.

Although our notation distinguishes the time-indexed models $\{Q_t, G_t\}_{t=1}^\tau$, we implement these functions by the same $Q$-head and $G$-head with shared parameters.
At step $t$, $G$ has access to $H_t$, while $Q$ has access to $(H_t,A_t)$; this is enforced by masking future variables in the input sequence.

To condition $Q$ on future policies, we introduce a policy-tail encoder $\tilde{\rho}$ that aggregates the per-time-step policy embeddings $\rho_{t+1:\tau}$. 
We implement $\tilde{\rho}$ as an RNN operating backward in time, mapping $\rho_{t+1:\tau}$ to a fixed-dimensional embedding $\tilde{\rho}(\rho_{t+1:\tau}) \in \mathbb{R}^p$, which is concatenated with the Transformer block representation and passed to the $Q$-head, yielding the reparameterized outcome model
\[
Q_t = q\big(\phi(H_t,A_t), \tilde{\rho}(\rho_{t+1:\tau})\big).
\]

\textbf{Training}\;\;
Training follows Algorithm~\ref{algo:g_comp_encoder}. 
At each time step $t$ and for each counterfactual policy $k$, we first generate ICE G-computation targets by evaluating the shared outcome model at the next time step under the counterfactual action 
$\dot a_{t+1}^{(k)}$
(Lines 8–9). 
The outcome model is then updated by minimizing a mean squared error loss between the current prediction and the generated target (Lines 10–12).

To stabilize training, we introduce a lagged target network. Specifically, we maintain a slowly updated copy of the parameters $\theta'$ and use it to compute the regression target $\hat{Q}_{t+1}$, while updating $\theta$ via gradient descent. The target network is updated using Polyak averaging ($\theta' \leftarrow \beta\,\theta + (1-\beta)\,\theta'$) after each batch, which reduces the moving-target issue by decoupling target generation from parameter updates. We set $\beta=0.005$ throughout.

The treatment model is trained in parallel using a binary cross-entropy loss on observed actions (Lines 14–15). 
All losses are summed across time steps and counterfactual policies to form the final training objective.


\textbf{Inference}\;\;
After training, we evaluate each counterfactual policy independently using a one-step LTMLE update on a held-out evaluation set. 
The learned outcome and treatment models provide initial nuisance estimates, which are then targeted using regularized longitudinal TMLE~\cite{bibaut2019more,shirakawa2024longitudinal} (Appendices~\ref{appx:inference} and~\ref{appx:ltmle}).


\begin{table*}[t]
\centering
\caption{Results on semi-synthetic DGPs with counterfactuals of deterministic sequences.}
\vspace{-5pt}
\label{tab:results-semi-syn-01-combined}
\resizebox{\linewidth}{!}{%
\begin{tabular}{l
 ccc ccc
 ccc ccc}
\toprule
& \multicolumn{6}{c}{\textbf{Limited Time-varying Confounding}} 
& \multicolumn{6}{c}{\textbf{Expanded Time-varying Confounding}} \\
\cmidrule(lr){2-7} \cmidrule(lr){8-13}

\textbf{Model}
& \multicolumn{3}{c}{\textbf{Bias}}
& \multicolumn{3}{c}{\textbf{RMSE}}
& \multicolumn{3}{c}{\textbf{Bias}}
& \multicolumn{3}{c}{\textbf{RMSE}} \\

& CF 1a & CF 2a & CF 3a
& CF 1a & CF 2a & CF 3a
& CF 1a & CF 2a & CF 3a
& CF 1a & CF 2a & CF 3a \\
\midrule
\rowcolor{gray!6}
G-comp (sl.)
& 1.62$\pm$1.19 & 1.31$\pm$1.06 & 1.73$\pm$ 0.90
& 1.99 & 1.67 & 1.94
& 1.64$\pm$ 0.90 & 1.04 $\pm$ 1.26 & 1.58 $\pm$ 0.83
& 1.86 & 1.60 & 1.78 \\

LTMLE (sl.)
& 2.92$\pm$1.54 & 3.66$\pm$1.95 & 1.92$\pm$ 1.61
& 3.28 & 4.12 & 2.47
& 1.89$\pm$ 1.51 & 2.41$\pm$ 2.06 & 2.41$\pm$ 1.47 
& 2.40 & 3.13 & 2.80 \\
\rowcolor{gray!6}

DeepACE
& 2.15$\pm$0.51 & 0.51$\pm$0.47 & 1.98$\pm$0.77
& 2.21 & 0.69 & 2.12
& 2.42$\pm$0.45 & 0.49$\pm$0.26 & 2.21$\pm$0.52
& 2.46 & 0.55 & 2.26 \\

D.LTMLE
& 0.77$\pm$0.61 & 0.71$\pm$0.61 & 0.93$\pm$0.97
& 0.97 & 0.92 & 1.33
& 0.81$\pm$0.75 & 1.13$\pm$0.88 & 0.58$\pm$0.58
& 1.09 & 1.42 & 0.81 \\
\rowcolor{gray!6}
PEQ-Net
& \textbf{0.53$\pm$0.38} & \textbf{0.26$\pm$0.22} & \textbf{0.21$\pm$0.18}
& \textbf{0.65} & \textbf{0.34} & \textbf{0.27}
& \textbf{0.63$\pm$0.37} & \textbf{0.20$\pm$0.15} & \textbf{0.26$\pm$0.22}
& \textbf{0.72} & \textbf{0.25} & \textbf{0.34} \\

\bottomrule
\end{tabular}
}
\vspace{-0.5\baselineskip}
\end{table*}

\textbf{Guarantee}\;\;
The next theorem analyzes how the second-order remainder behaves as policies vary (Proof in Appendix~\ref{appx:proof_continuity}). To formalize policy similarity, we measure policy distance at the trajectory level using the MMD between policy-induced trajectory distributions, defined on a product RKHS $\mathcal{F}_{1:\tau}$ (Appendix~\ref{appx:trajactory_mmd}).

\begin{restatable}[Lipschitz control of the CATE second-order remainder]{theorem}{TheoremSmoothRemainder}\label{theorem:smooth_remainer}
Suppose the parameterized outcome model and the policy encoder are Lipschitz in their respective inputs, and that
the true outcome model and the
policy-induced density product are Lipschitz continuous
with respect to policy-induced trajectory
distributions (detailed in Assumption~\ref{assum:regularity}).
Then there exists a finite constant $L_{\mathrm{R}} < \infty$ such that the magnitude of the second-order remainder in the CATE policy contrast is
controlled by the trajectory-level policy distance:
$
\big|
\mathrm{Rem^{(i),(j)}}
\big|
\le
L_{\mathrm{R}}\,
\|\mu_{1:\tau}^{(i)} - \mu_{1:\tau}^{(j)}\|_{\mathcal{F}_{1:\tau}}.$
\end{restatable}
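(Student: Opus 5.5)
The plan is to write the second-order remainder explicitly as a functional of the policy. Then $\mathrm{Rem}^{(i),(j)}$ becomes the difference of one Lipschitz map evaluated at two policies, and the Lipschitz constant is assembled by a telescoping/product-rule argument.

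\textbf{Step 1: explicit remainder.} For LTMLE with ICE nuisances, the standard von Mises calculation gives, for a policy $\bpi$,
\begin{align*}
\mathrm{Rem}(\bpi)=\sum_{t=1}^{\tau}\mathbb{E}_{P_0}\Big[\Big(\prod_{s=1}^{t}\frac{\pi_s(H_s)}{\pi^*_s(H_s)}\Big)\Big(\frac{\pi^*_t(H_t)}{\hat\pi_t(H_t)}-1\Big)\big(\hat Q_t-Q_t\big)(\pi_t(H_t),H_t,\pi_{t+1:\tau})\Big],
\end{align*}
up to the ratio bookkeeping for the estimated cumulative weights. Here $\hat Q_t=q(\phi(H_t,A_t),\tilde\rho(\rho_{t+1:\tau}))$ is the single shared PEQ-Net model.

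The key structural point is that in PEQ-Net the same $\hat Q$ and the same $\hat\pi$ are used for every policy. This matters because $\hat\pi$ is trained only on observed actions, and $\hat Q$ depends on the policy only through $\rho$. So the policy enters $\mathrm{Rem}(\bpi)$ only through three channels:
\begin{itemize}
\item the policy-induced density product $w_t^{\bpi}=\prod_{s\le t}\pi_s/\pi^*_s$;
\item the true tail-conditioned outcome $Q_t(\cdot,\cdot,\pi_{t+1:\tau})$;
\item the embedding input $\rho_{t+1:\tau}$ of $\hat Q_t$, together with the evaluated action $\pi_t(H_t)$.
\end{itemize}
This contrasts with Lemma~\ref{lemma:discontinuous_remainder}, where separate training makes $\hat Q$ an independent random object for each policy.

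\textbf{Step 2: term-by-term difference.} For each $t$, write $a_t^{\bpi}=w_t^{\bpi}$, $b_t=\pi^*_t/\hat\pi_t-1$ (policy-free), and $c_t^{\bpi}=\hat Q_t^{\bpi}-Q_t^{\bpi}$. Decompose
\[
a^{(i)}b\,c^{(i)}-a^{(j)}b\,c^{(j)}=(a^{(i)}-a^{(j)})\,b\,c^{(i)}+a^{(j)}\,b\,(c^{(i)}-c^{(j)}).
\]
By positivity and boundedness, $b$, $c$, and $a$ are uniformly bounded. The first term is then bounded by $C\,L_w\|\mu_{1:\tau}^{(i)}-\mu_{1:\tau}^{(j)}\|$ using Assumption~\ref{assum:regularity} on the density product. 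For the second term, split $c^{(i)}-c^{(j)}$ into two pieces:
\begin{itemize}
\item The $\hat Q$ part is bounded by $L_q L_{\tilde\rho}\|\rho^{(i)}_{t+1:\tau}-\rho^{(j)}_{t+1:\tau}\|$.
\item The $Q$ part is bounded by $L_Q\|\mu^{(i)}-\mu^{(j)}\|$ directly from the assumption.
\end{itemize}
Summing over $t\le\tau$ gives a finite $L_{\mathrm R}$.

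\textbf{Step 3: linking embeddings to MMD (main obstacle).} The remaining gaps are the embedding distance $\|\rho^{(i)}-\rho^{(j)}\|$ and the change in the evaluated action $\pi_t(H_t)$; both must be related to the trajectory MMD.

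For the embeddings, I would use that MDS on the normalized $D_t$ is Lipschitz against per-step MMD. This holds exactly if the stress in Eq.~\eqref{eq:mds_stress} is zero. Otherwise, I would assume it as part of ``the policy encoder is Lipschitz in its input'', i.e. treat $\mu_t\mapsto\rho_t$ as a Lipschitz map, absorbing the $\max(D_t)$ normalization into the constant. Per-step MMDs are then bounded by the product-RKHS norm, since $\|\mu_{1:\tau}\|^2_{\mathcal F_{1:\tau}}=\sum_t\|\mu_t\|^2_{\mathcal F_t}$.

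The evaluated-action discrepancy is $\mathbb{E}|\hat Q_t(\pi^{(i)}_t(H_t),\cdot)-\hat Q_t(\pi^{(j)}_t(H_t),\cdot)|$. I would handle it by folding it into the Lipschitz-in-distribution assumption: write the expectation over $(H_t,\pi_t(H_t))$ as an integral against the induced measure on $\mathcal Z_t$. Then it is bounded by the RKHS norm of the integrand, which lies in $\mathcal F_t$ under the regularity assumption, times the per-step MMD via the reproducing property. This last move is the delicate point. If the integrand is not in $\mathcal F_t$, I would fall back on the Lipschitz assumption stated directly for $Q$ and the density product.
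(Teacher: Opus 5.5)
Your decomposition matches the paper's. Because $\hat\pi$ is shared, $b_t$ is policy-free, the paper's Term~II vanishes, and your two pieces are exactly its Terms~I and~III. The chain $\hat Q\to\tilde\rho\to\rho_r\to$ per-step MMD is also the same.

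\textbf{Main gap: the lift to the full trajectory distance.} The identity $\|\mu_{1:\tau}\|^2_{\mathcal F_{1:\tau}}=\sum_t\|\mu_t\|^2_{\mathcal F_t}$ is false here. It is the norm of a direct sum (a sum kernel), but the paper's $\mathcal F_{1:\tau}=\bigotimes_r\mathcal F_r$ is the tensor-product RKHS of the product kernel, where norms of elementary tensors multiply. Take two laws that differ only at step~1 and share a common, independent law $\nu$ on the remaining coordinates. Then $\|\mu^{(i)}_{1:\tau}-\mu^{(j)}_{1:\tau}\|_{\mathcal F_{1:\tau}}=\|\mu^{(i)}_1-\mu^{(j)}_1\|_{\mathcal F_1}\,\|\mu_\nu\|$. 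This is no larger than the step-1 distance, and arbitrarily smaller when $\nu$ is spread out. So no norm identity, and no constant uniform over distributions, gives the domination you need.

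You also use this lift silently in Step~2. Assumption~\ref{assum:regularity} bounds the density product by $\|\mu^{(i)}_{1:t}-\mu^{(j)}_{1:t}\|_{\mathcal F_{1:t}}$ and $Q^*_t$ by $\|\mu^{(i)}_{t+1:\tau}-\mu^{(j)}_{t+1:\tau}\|_{\mathcal F_{t+1:\tau}}$. Neither is the full-trajectory distance you wrote.

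The missing ingredient is Lemma~\ref{lemma:subsequence_mmd}, which has two cases:
\begin{itemize}
\item If the trajectory MMD is zero, the product Gaussian kernel is characteristic, so the trajectory laws are equal. Hence the marginals on every sub-interval are equal and every sub-interval MMD is zero.
\item If the trajectory MMD is positive, the normalized kernel bounds every sub-interval MMD by $2$, so the ratio is a finite constant.
\end{itemize}
That constant depends on the pair $(i,j)$. The resulting $L_{\mathrm R}$ is therefore pair-specific, and that is the strength of conclusion actually available.

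\textbf{Second gap: targeting.} Step~1 treats $\hat Q_t$ as the untargeted network output common to all policies. The remainder of the LTMLE estimator instead involves the targeted $\hat Q^{(k),\epsilon_k}_t$. The fluctuations $\epsilon_k$ are fitted separately for each policy: Algorithm~\ref{algo:inference} runs LTMLE per $k$, with policy-dependent weights and pseudo-outcomes. After targeting, $\hat Q$ thus depends on the policy through more than $\rho$. You need a step like the paper's $\|\hat Q^{(i),\epsilon_i}_s-\hat Q^{(j),\epsilon_j}_s\|_2\le C_\epsilon\|\hat Q^{(i)}_s-\hat Q^{(j)}_s\|_2$, which it argues from boundedness of $\epsilon$ under positivity, or an explicit assumption to that effect.

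\textbf{Unnecessary step.} Your evaluated-action channel, and the RKHS-membership argument for it (where the integrand also carries policy-dependent weights), can be dropped. In the representation the paper uses (Lemma~1 of D\'iaz et al.), $\hat Q_s-Q^*_s$ is evaluated at the observed $(A_s,H_s)$. The policy enters only through $\eta^{(k)}_s=\mathds{1}(A_s=\pi^{(k)}_s(H_s))$, that is, through the density product already controlled by $L_\eta$.
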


This control of the remainder
mitigates spurious non-smooth behavior that 
arise in the separate estimation paradigm.
The Lipschitz condition on the parametrized outcome model and policy encoder are standard in machine learning. The smooth condition of true outcome models is mild, ensuring that counterfactual outcomes vary smoothly for small changes in the policy.
The regularity condition on policy-induced density product ensures that it varies smoothly with the policy, preventing small policy changes from being amplified multiplicatively over time. 
We empirically validate this control through numerical experiments.

%% file: experiments.tex
\begin{table*}[t]
\centering
\caption{Results on counterfactuals of dynamic treatment policy. }
\vspace{-5pt}
\label{tab:results-semi-syn-func}
\resizebox{\linewidth}{!}{%
\begin{tabular}{l c cc cc cc cc}
\toprule
& & \multicolumn{4}{c}{\textbf{Policies differ only in the first two steps}} 
& \multicolumn{4}{c}{\textbf{Policies differ in all steps}} \\
\cmidrule(lr){3-6} \cmidrule(lr){7-10}

\textbf{DGP} & \textbf{Model} &
\multicolumn{2}{c}{\textbf{Bias}} &
\multicolumn{2}{c}{\textbf{RMSE}} & \multicolumn{2}{c}{\textbf{Bias}} &
\multicolumn{2}{c}{\textbf{RMSE}} \\
& & CF 1b & CF 2b
& CF 1b & CF 2b 
& CF 1c & CF 2c
& CF 1c & CF 2c\\
\midrule
\multirow{2}{*}{Limited}
& D.LTMLE  & 0.0998$\pm$0.0780 & 0.1006$\pm$0.1068 & 0.1255 & 0.1448 & 0.2276$\pm$ 0.1538 & 0.2310 $\pm$ 0.1357 & 0.2725 & 0.2662 \\
& PEQ-Net & \textbf{0.0015$\pm$0.0021} & \textbf{0.0132$\pm$0.0070} & \textbf{0.0026} & \textbf{0.0148}  & \textbf{0.0969 $\pm$ 0.0761} & \textbf{0.1421 $\pm$ 0.0757} & \textbf{0.1220} & \textbf{0.1601}\\
\rowcolor{gray!6}
& D.LTMLE  & 0.1024$\pm$0.0825 & 0.1386$\pm$0.1208 & 0.1302 & 0.1818 
& 0.2986$\pm$ 0.2227 & 0.4320$\pm$0.2492 & 0.3692 & 0.4956\\
\rowcolor{gray!6}
\multirow{-2}{*}{Expanded}
& PEQ-Net  & \textbf{0.0003$\pm$0.0001} & \textbf{0.0030$\pm$0.0017} & \textbf{0.0003} & \textbf{0.0034} & \textbf{0.1130 $\pm$ 0.0883} & \textbf{0.2530 $\pm$ 0.1012} & \textbf{0.1420} & \textbf{0.2715}\\
\bottomrule
\end{tabular}
}
\end{table*}

We evaluate PEQ-Net on semi-synthetic datasets derived from MIMIC-III~\cite{PhysioNet-mimiciii-1.4}, a real-world clinical dataset. Section~\ref{subsec:setup} outlines the experimental setup. Section~\ref{subsec:results} demonstrates that PEQ-Net consistently reduces bias and variance, and Section~\ref{subsec:ablation} presents ablation studies.

\vspace{-5pt}
\subsection{Setup} \label{subsec:setup}
\textbf{Baselines}\;\;
We compare PEQ-Net against longitudinal CATE estimators based on ICE G-computation. Specifically, we include (1) ICE G-computation with Super Learner, (2) LTMLE with Super Learner, (3) DeepACE, and (4) DeepLTMLE, which represent increasing levels of modeling flexibility within this paradigm. See Appendix~\ref{appx:implementation} for implementation details.


\textbf{Datasets}\;\;
We first adopt the semi-synthetic data-generating process (DGP) introduced in DeepACE, which includes 10 time-varying real-world measurements in MIMIC-III, along with synthesized treatment assignments and outcomes. In this setting, treatment does not affect time-varying covariates, but only the intermediate outcomes, which induces a limited treatment-confoundeder feedback loop. We then extend this DGP by introducing five additional synthesized time-varying covariates that both influence and are influenced by treatment, resulting in a expanded treatment–confounder feedback structure. Detailed descriptions of both DGPs are provided in Appendix~\ref{appx:dgp}.

For each DGP, we sample $N=1000$ patient trajectories from the MIMIC-III dataset, and then synthesize data based on the real-world observed covariates over $\tau=15$ time steps. We use 800 trajectories for training and 200 for validation to select optimal hyperparameters (See Appendix~\ref{appx:hyperparam}). The selected hyperparameters are then used to retrain the nuisance models on the combined set of 1000 trajectories. Final CATE estimates are obtained by applying the retrained nuisance models to the same 1000 trajectories.

\textbf{Counterfactual Policy\;\;}
We consider two classes of counterfactuals: 
\begin{enumerate}[label=(\alph*), itemsep=0pt, topsep=2pt, partopsep=0pt, parsep=0pt, leftmargin=15pt]
    \item \label{policy:deterministic} \emph{deterministic treatment sequences}, which are standard for benchmarking in the longitudinal CATE literature and supported by all baseline methods.

    \item \label{policy:dynamic} \emph{dynamic treatment policies}, which map observed histories to treatment decisions. 
\end{enumerate}

Most baselines are designed for \ref{policy:deterministic}, while PEQ-Net accommodates both \ref{policy:deterministic} and \ref{policy:dynamic}. Accordingly, we evaluate the two classes separately.

For \ref{policy:deterministic}, we follow the setup of DeepACE and evaluate all baselines on three counterfactual settings of this class. In this case, we bypass the MDS step and directly input the deterministic sequence into the policy encoder, treating it as a degenerate policy embedding.

For \ref{policy:dynamic}, we restrict comparisons to DeepLTMLE, which is designed to handle
dynamic policies without architectural changes. We consider two scenarios with different levels of policy similarity: (1) policies differ only in the early steps and coincide thereafter, creating substantial structural overlap; and (2) policies differ at every step, eliminating trivial overlap and serving as a stress test for policy-aware representation learning. Detailed counterfactual setups are provided in Appendix~\ref{appx:subsec_cf}.



For each method and counterfactual setting, we repeat 20 experiments on data generated with different random seeds, and then report the absolute bias and root mean squared error (RMSE) between the estimated and true CATE.

\subsection{Results}\label{subsec:results}

Tables~\ref{tab:results-semi-syn-01-combined} and~\ref{tab:results-semi-syn-func} summarize performance across DGPs and counterfactual types. PEQ-Net consistently achieves the lowest bias and RMSE. 
Despite sharing the same backbone with DeepLTMLE, PEQ-Net yields substantial error reductions across counterfactuals, highlighting the benefit of policy-aware model sharing.

\subsection{Ablation Study}\label{subsec:ablation}

\textbf{Improvement in ICE G-computation ($Q$)}
Our final estimator uses LTMLE, which depends on both the outcome regression ($Q$) and the treatment model ($G$). 
To isolate the contribution of the proposed policy encoder to the learned $Q$-functions, we remove the LTMLE targeting update for both DeepLTMLE and PEQ-Net and report CATE estimates directly from the fitted $Q$. Tables~\ref{tab:results-semi-syn-ice-only} and \ref{tab:results-semi-syn-01-combined-ice-only} in Appendix~\ref{appx:ice_only} show that PEQ-Net continues to outperform DeepLTMLE under this ICE-only evaluation, indicating that the primary improvement is attributable to the outcome regression $Q$, rather than to reduced variance from the treatment model $G$.


\textbf{Alternative multi-policy evaluation strategy}
Beyond fully separate estimation, two natural parameter-sharing alternatives for multi-policy evaluation are: (1) learning a shared backbone and subsequently fine-tuning it independently for each target policy, and (2) learning a model with a shared backbone and multiple Q-heads, where each policy is associated with a dedicated Q-head. We evaluate these alternatives using DeepLTMLE as the base estimator (see Appendix~\ref{appx:dltmle_finetune} for implementation details).

Figure~\ref{fig:finetune-multihead-ablation} shows that both strategies improve over fully separate estimation, suggesting that sharing parameters across policies can reduce estimation variance. Notably, the multi-Q-head variant outperforms independent fine-tuning, indicating that jointly training within a unified model is more effective than adapting separate models after pretraining. Nevertheless, the proposed PEQ-Net achieves substantially lower RMSE than both alternatives. These results indicate that the improvement from PEQ-Net is not solely due to parameter-sharing, but arises from leveraging structured similarities between policies via the embedding.

\begin{figure}[b!]
\vspace{-10pt}
    \centering
    \includegraphics[width=0.9\linewidth]{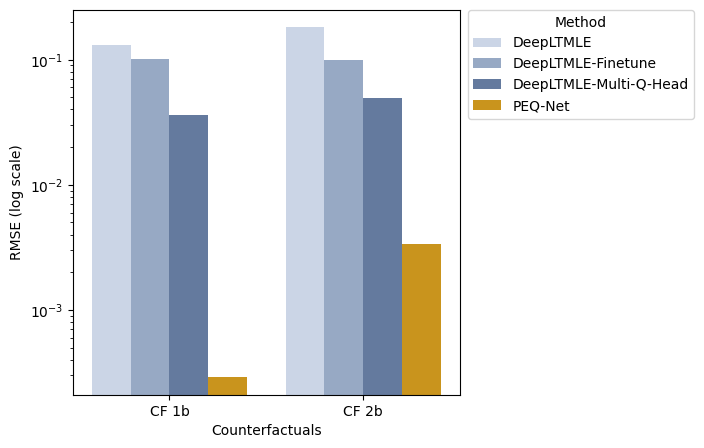}
    \vspace{-5pt}
    \caption{Ablation: PEQ-Net, Fine-tuning, Multi-Q-Head}
    \label{fig:finetune-multihead-ablation}
\end{figure}

\textbf{Joint training with dissimilar policies} 
PEQ-Net leverages structural similarity across policies to mitigate finite-sample variance inflation. To examine its behavior when this assumption is weakened, we additionally train PEQ-Net on policy sets containing extreme policies (thresholds 0 and 1), corresponding to “always treat” and “never treat”, which are structurally dissimilar to intermediate threshold policies in the main results (Table~\ref{tab:results-semi-syn-func}). Experiment details are provided in Appendix~\ref{appx:subsec_cf}.

Table~\ref{tab:results-disimilar-merged} shows that including both similar and dissimilar policies (i.e., $\{0, 0.4, 0.5, 0.6, 1\}$) does not consistently improve or degrade performance compared to smaller or more targeted configurations. Nevertheless, across all counterfactuals, PEQ-Net consistently outperforms DeepLTMLE under all training configurations. This suggests that the proposed policy embedding enables effective information sharing even when the policy set includes structurally dissimilar elements, providing robust gains over baselines without requiring careful selection of similar policies.

Overall, PEQ-Net is not limited to highly similar policies, but its advantage is the \textit{strongest} when policies share sufficient structure. This aligns with our target application in retrospective policy evaluation and clinical hypothesis generation, where candidate policies are typically conservative and closely related. 
Grouping policies by similarity or adaptively controlling information sharing is an interesting direction for future work.

\begin{table}[t]
\centering
\caption{Performance under joint training with dissimilar policies. CF 1c, 2c, 3c, and 4c compare the baseline policy (threshold $0.5$) against counterfactual policies with thresholds $0.4$, $0.6$, $0$, and $1$, respectively. PEQ-Net is trained on different policy sets to evaluate robustness to structurally dissimilar policies.}
\label{tab:results-disimilar-merged}
\resizebox{\linewidth}{!}{%
\begin{tabular}{l cc cc}
\toprule
\textbf{Model} &
\multicolumn{2}{c}{\textbf{Bias}} &
\multicolumn{2}{c}{\textbf{RMSE}} \\
& CF 1c & CF 2c & CF 1c & CF 2c \\
\midrule
D.LTMLE  & 0.30$\pm$0.22 & 0.43$\pm$0.25 & 0.37 & 0.50 \\
PEQ (joint: 0.4, 0.5, 0.6) & 0.11$\pm$0.09 & 0.25$\pm$0.10 & 0.14 & 0.27 \\
PEQ (joint: 0, 0.4, 0.5, 0.6, 1) & 0.14$\pm$0.11 & 0.11$\pm$0.09 & 0.18 & 0.14 \\
\midrule
& CF 3c & CF 4c & CF 3c & CF 4c \\
\midrule
D.LTMLE  & 0.98$\pm$1.52 & 1.66$\pm$1.92 & 1.77 & 2.51 \\
PEQ (joint: 0, 0.5, 1) & 0.63$\pm$0.21 & 0.64$\pm$0.32 & 0.66 & 0.71 \\
PEQ (joint: 0, 0.4, 0.5, 0.6, 1) & 0.96$\pm$0.31 & 0.86$\pm$0.72 & 1.01 & 1.11 \\
\bottomrule
\end{tabular}
}
\vspace{-\baselineskip}
\end{table}

\textbf{Scalability}\;\;
The policy embedding in PEQ-Net relies on pairwise MMD computations, which introduce a quadratic dependence on both the number of policies $K$ and the sample size $N$. Let $d$ denote the feature dimension per time step and $\tau$ the horizon. The overall cost consists of: (i) an outer loop over policy pairs and time steps with complexity $O(K^2 \tau)$, and (ii) an inner kernel-based MMD computation at each time step with cost $O(N^2 d t)$. In the worst case, this results in a total complexity of $O(K^2 N^2 \tau^2 d)$ for constructing the policy distance matrix.

Empirically, we observe a clear quadratic scaling with respect to the number of policies $K$ (Table~\ref{tab:scalability_wrt_K}), consistent with the theoretical analysis. In practice, however, $K$ is typically moderate in clinical OPE settings ($K \ll N$), where policies correspond to a small number of expert-defined, interpretable treatment strategies. Moreover, the outer loop over policy pairs and time steps is also easily parallelizable.

The dependence on $N$ is the primary computational bottleneck, a limitation shared by kernel-based methods more broadly.  A wide range of standard approximation techniques \citet{williams2000using,rahimi2007random,rudi2017falkon} can reduce the $O(N^2)$ complexity to near-linear or sub-quadratic complexity and can be incorporated into our framework.

\subsection{Real-world Case Study}

\begin{figure}[t]
\vspace{-8pt}
    \centering    \includegraphics[width=0.6\linewidth]{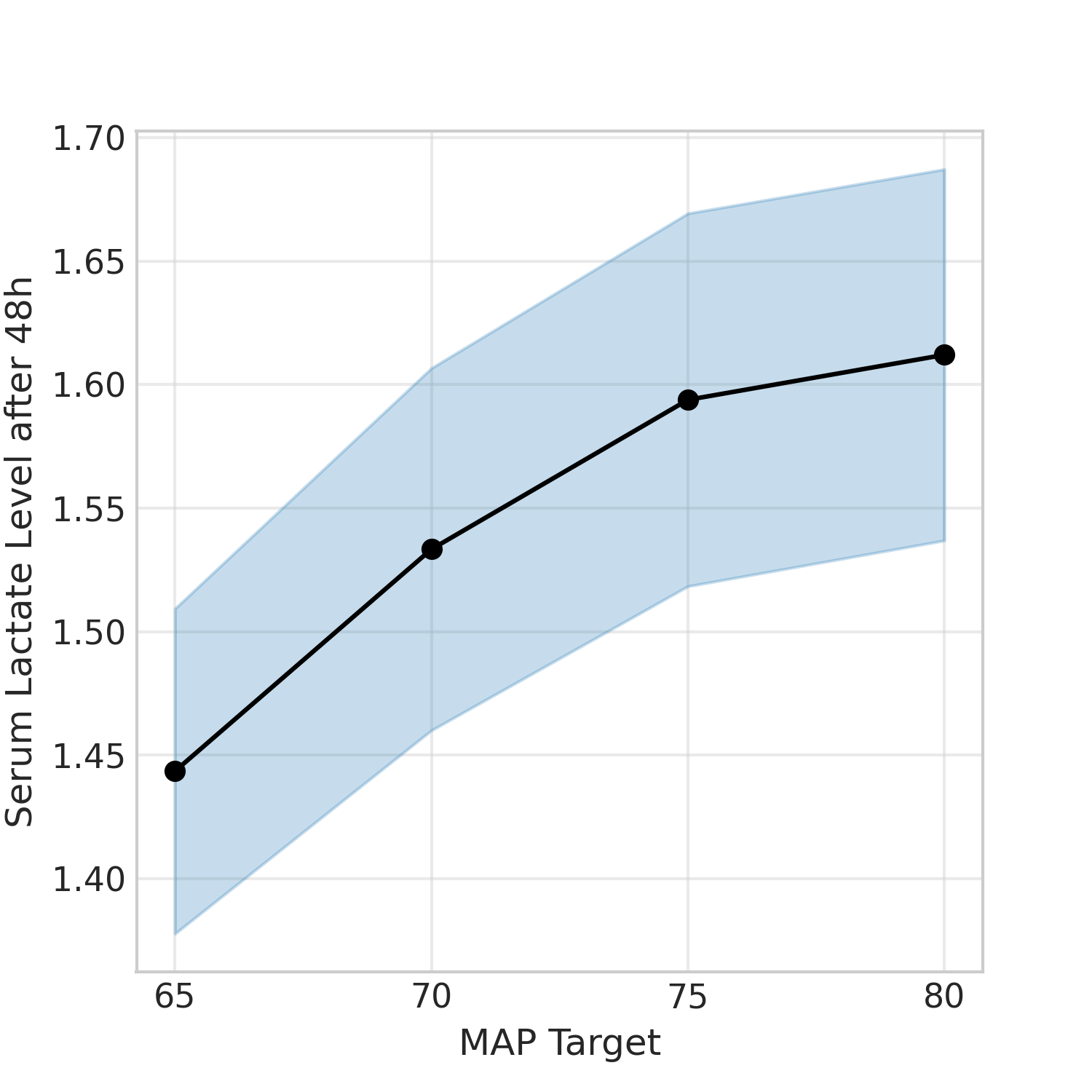}
    \vspace{-5pt}
    \caption{Higher MAP target associated with higher lactate level}
    \label{fig:sepsis_hypotension_lactate}
    \vspace{-\baselineskip}
\end{figure}

We applied PEQ-Net to a real-world cohort of sepsis patients with hypotension from the MIMIC-IV database to estimate the CATE of alternative vasopressor weaning strategies. The cohort includes 999 adult ICU patients who met Sepsis-3 criteria, developed hypotension (MAP $\leq 65$ mmHg), and initiated vasopressor therapy within the first 24 hours of admission. The outcome of interest is serum lactate measured at 72 hours, a clinically relevant marker of tissue hypoperfusion in sepsis~\cite{nguyen2010early}.

We considered four counterfactual policies defined by discontinuing vasopressors once MAP was maintained at or above 65, 70, 75, or 80 mmHg for 12 consecutive hours. Figure~\ref{fig:sepsis_hypotension_lactate} reports CAPO estimates with 95\% confidence intervals based on the estimated influence function. The results exhibit a consistent trend: higher MAP targets (more aggressive vasopressor use) are associated with higher lactate levels at 72 hours, although the estimated effects are modest. This pattern is consistent with a randomized control trial that reports no survival benefit from higher MAP targets~\cite{asfar2014high} and aligns directionally with current guideline recommendations to avoid unnecessary vasopressor exposure~\cite{evans2021surviving}.

%% file: conclusion.tex
We identify a structural limitation of the prevailing separate estimation paradigm for longitudinal CATE estimation, in which independent nuisance fitting across counterfactual policies leads to finite-sample variance inflation. Leveraging the structure of ICE G-computation, we propose a policy-aware outcome reparameterization that enables principled model sharing across policies while preserving identification. Our approach yields a substantial reduction in RMSE.

%% file: appendix.tex
\section{Algorithms}

\renewcommand{\thefigure}{\thesection.\arabic{figure}}
\renewcommand{\thetable}{\thesection.\arabic{table}}

\setcounter{figure}{0}
\setcounter{table}{0}

\subsection{Inference}\label{appx:inference}
\begin{algorithm}[h]
    \caption{PEQ-Net Inference}\label{algo:inference}
    \begin{algorithmic}[1]
        \STATE \textbf{Input:} Evaluation Data $\{O_i\}_{i=1}^m$; Prespecified counterfactual sequences $\{\boldsymbol{\pi}^{(k)}\}_{k=1}^K$; Learned model blocks $q,g,\phi,\rho$, with parameters $\theta$.
        \STATE \textbf{Output:} $\hat{\psi}^{(k)}$ for $ k = 1,\dots, K$
        \FOR{$k = 1,\dots, K$}
            \FOR{$t = 1$ to $\tau$}
                \STATE $Q^{(k)}_t (A_t,H_t) = q(\phi(H_t, A_t);\tilde{\rho}(\rho_{t+1:\tau}^{(k)}))$
                \STATE $G_t = g(\phi(H_t))$
            \ENDFOR
            \STATE $\hat{\psi}^{(k)} = LTMLE(\{O_i\}_{i=1}^m;\{Q_t^{(k)}\}_{t=1}^\tau; \{G_t\}_{t=1}^\tau)$
        \ENDFOR
    \end{algorithmic}
\end{algorithm}

\subsection{LTMLE}\label{appx:ltmle}
\begin{algorithm}[h]
\caption{LTMLE}
\begin{algorithmic}[1]
    \STATE \textbf{Input}: Dataset $\{O_i\}_{i=1}^n$; Initial estimates of $Q_t$ and $\hat{\pi}_t$ for $t = 1, \dots, \tau$; L1 regularization coefficient: $\lambda$.
    \STATE \textbf{Output}: CAPO estimate $\hat{\psi}_n = \mathbb{P}_nQ_{1,\epsilon_1}(\dot{a}_1, H_1)$
    \STATE Initialize $Q_{\tau+1} = Y$, $\epsilon_{\tau+1}=0$
    \STATE Initialize submodel:$$\mathrm{logit} Q_{t,\epsilon_t} = \mathrm{logit} Q_t + \epsilon_t$$ and loss function 
    $$L_t(\epsilon_t) = \left(\prod_{s=1}^t\frac{\mathds{1}(A_s=\dot{a}_s)}{ \dot{a}_s\cdot\hat{\pi}_s(H_s) + (1-\dot{a}_s)(1-\hat{\pi}_s(H_s))}\right) \mathcal{L}_{bce}(Q_{t,\epsilon_t}(A_t, H_t), Q_{t+1,\epsilon_{t+1}}(\dot{a}_{t+1},H_{t+1}))$$
    \STATE \FOR{$t=\tau$ to $1$}
        \STATE $\epsilon_t \leftarrow \underset{\epsilon_t}{\arg\min} \mathbb{P}_n L_t(\epsilon_t) + \lambda|\epsilon_t|$
        \ENDFOR
\end{algorithmic}
\end{algorithm}

\section{Theory and Proof}


\subsection{Proof of Lemma~\ref{lemma:discontinuous_remainder}}\label{appx:proof_separate_model}
\LemmaDiscontinuousRemainder*
\begin{proof}

\textbf{Notation.}
For time $s$, let $Q_s^{(i),*}(A_s,H_s)$ and $\hat Q_s^{(i)}(A_s,H_s)$ denote the true and estimated outcome model under policy $\boldsymbol{\pi}^{(i)}$, and let $\pi_s^*(H_s)$ and $\hat \pi_s(H_s)$ denote the true and estimated propensity scores. Define the corresponding behavioral densities
\begin{equation*}
\eta_s^*(A_s,H_s) = A_s \pi_s^*(H_s) + (1-A_s)(1-\pi_s^*(H_s)),\quad\quad \hat\eta_s(A_s,H_s) = A_s \hat \pi_s(H_s) + (1-A_s)(1-\hat \pi_s(H_s)).
\end{equation*}

The true behavioral density $\eta_s^*$ is invariant to the choice of counterfactual policy. However, under the separate estimation paradigm, nuisance models are refit independently for each policy. We therefore denote by $\hat\eta_s^{[i]}$ the behavioral density estimate obtained from the nuisance estimation run associated with policy $\boldsymbol{\pi}^{(i)}$.

For a counterfactual policy $\pi_s^{(i)}$, define the induced counterfactual density:
$\eta_s^{(i)}(A_s,H_s) = \mathds{1}\left(A_s = \pi_s^{(i)}(H_s)\right).$

To avoid ambiguity, note that $\hat\eta_s^{[i]}$ denotes an estimate of the behavioral density, whereas $\eta_s^{(i)}$ is fully determined by the counterfactual policy.

Define the following error and weight terms:
\begin{itemize}
    \item Propensity error: $\xi_{G,s}^{[i]} = \hat\eta_s^{[i]} - \eta_s^*$,
    \item Outcome error: $\xi_{Q,s}^{(i)} = \hat Q_s^{(i)} - Q_s^{(i),*}$,
    \item Cumulative weight: $C_s^{(i)} = \prod_{r=1}^{s-1} \frac{\eta_r^{(i)}}{\hat\eta_r^{[i]}},$
    \item Importance weight: $W_s^{(i)} = \frac{C_s^{(i)}\eta_s^{(i)}}{\eta_s^*\hat\eta_s^{[i]}}
    = \frac{1}{\eta_s^*}\prod_{r=1}^{s} \frac{\eta_r^{(i)}}{\hat\eta_r^{[i]}}.$
\end{itemize}
Analogous definitions apply for policy $\boldsymbol{\pi}^{(j)}$.

\textbf{Remainder representation.}
By Lemma~1 of \citet{diaz2023nonparametric}, the second-order remainder of the CAPO estimator under policy $\boldsymbol{\pi}^{(k)}$ admits the representation
\[
\mathrm{Rem}^{(k)} = \sum_{s=1}^{\tau-1} \mathbb{E}\!\left[R_s^{(k)}(A_s,H_s)\right],
\]
where
\[
R_s^{(k)}
=
C_s^{(k)}
\left[
\frac{\eta_s^{(k)}}{\hat\eta_s^{[k]}} - \frac{\eta_s^{(k)}}{\eta_s^*}
\right]
\left[
\hat Q_s^{(k)} - Q_s^{(k),*}
\right].
\]

Since the CATE is defined as the difference between two CAPO estimators, its second-order remainder is
\[
\mathrm{Rem}^{(i),(j)}
=
\sum_{s=1}^{\tau-1}
\mathbb{E}\!\left[
R_s^{(i)}(A_s,H_s) - R_s^{(j)}(A_s,H_s)
\right].
\]



\textbf{Decomposition.}
For a fixed time step $s$, algebraic manipulation yields
\begin{align}
    R^{(i)}_s - R^{(j)}_s &= C^{(i)}_{s} \cdot \left[ \frac{\eta^{(i)}_s}{\hat{\eta}^{[i]}_s} - \frac{\eta^{(i)}_s}{\eta^*_s} \right] \cdot \left[ \hat{Q}_s^{(i)} - Q_s^{(i),*} \right] - C^{(j)}_{s} \cdot \left[ \frac{\eta^{(j)}_s}{\hat{\eta}^{[j]}_s} - \frac{\eta^{(j)}_s}{\eta^*_s} \right] \cdot \left[ \hat{Q}_s^{(j)} - Q_s^{(j),*} \right]\notag\\
    &= \frac{C^{(i)}_s \eta^{(i)}_s}{\eta_s^* \hat{\eta}_s^{[i]}}\left[\eta^*_s - \hat{\eta}^{[i]}_s\right] \left[ \hat{Q}_s^{(i)} - Q_s^{(i),*} \right] - \frac{C^{(j)}_s \eta^{(j)}_s}{\eta_s^* \hat{\eta}_s^{[j]}}\left[\eta^*_s - \hat{\eta}^{[j]}_s\right] \left[ \hat{Q}_s^{(j)} - Q_s^{(j),*} \right]\notag\\
    &= W_s^{(i)}\xi_{G,s}^{[i]}\xi_{Q,s}^{(i)} -W_s^{(j)}\xi_{G,s}^{[j]}\xi_{Q,s}^{(j)} \notag\\
    &= W_s^{(i)}\xi_{G,s}^{[i]}\xi_{Q,s}^{(i)} - W_s^{(j)}\xi_{G,s}^{[i]}\xi_{Q,s}^{(i)} + W_s^{(j)}\xi_{G,s}^{[i]}\xi_{Q,s}^{(i)} -W_s^{(j)}\xi_{G,s}^{[j]}\xi_{Q,s}^{(j)}\notag\\
    &= (W_s^{(i)} - W_s^{(j)})\xi_{G,s}^{[i]}\xi_{Q,s}^{(i)} + 
    W_s^{(j)}(\xi_{G,s}^{[i]}\xi_{Q,s}^{(i)} - \xi_{G,s}^{[j]}\xi_{Q,s}^{(j)})\notag\\
    &= (W_s^{(i)} - W_s^{(j)})\xi_{G,s}^{[i]}\xi_{Q,s}^{(i)} + 
    W_s^{(j)}(\xi_{G,s}^{[i]}\xi_{Q,s}^{(i)} - \xi_{G,s}^{[j]}\xi_{Q,s}^{(i)} + \xi_{G,s}^{[j]}\xi_{Q,s}^{(i)} - \xi_{G,s}^{[j]}\xi_{Q,s}^{(j)})\notag\\
    &= \underbrace{(W_s^{(i)} - W_s^{(j)})\xi_{G,s}^{[i]}\xi_{Q,s}^{(i)}}_{\textnormal{Term I}} + \underbrace{W_s^{(j)}\xi_{Q,s}^{(i)}(\xi_{G,s}^{[i]} - \xi_{G,s}^{[j]})}_{\textnormal{Term II}} + \underbrace{W_s^{(j)}\xi_{G,s}^{[j]}(\xi_{Q,s}^{(i)} - \xi_{Q,s}^{(j)})}_{\textnormal{Term III}}\label{eq:remainder_decomposition}
\end{align}

\textbf{Non-vanishing under identical policies.}
Assume $\boldsymbol{\pi}^{(i)} = \boldsymbol{\pi}^{(j)}$, so that $\eta_s^{(i)} = \eta_s^{(j)} = \eta_s$ for all $s$. Under the separate estimation paradigm, nuisance models are initialized and optimized independently, implying that
\[
\hat\eta_s^{[i]} \neq \hat\eta_s^{[j]}
\quad\text{and}\quad
\hat Q_s^{(i)} \neq \hat Q_s^{(j)}
\]
almost surely for at least one $s$.

We now show that each term above is non-zero almost surely.

\begin{itemize}
\item Term I:
Since $\hat\eta_s^{[i]} \neq \hat\eta_s^{[j]}$,
$
W_s^{(i)} - W_s^{(j)}
=
\frac{\prod_{r=1}^s \eta_r}{\eta_s^*}
\left(
\prod_{r=1}^s \frac{1}{\hat\eta_r^{[i]}}
-
\prod_{r=1}^s \frac{1}{\hat\eta_r^{[j]}}
\right)
\neq 0.
$

\item Term II:
$
\xi_{G,s}^{[i]} - \xi_{G,s}^{[j]}
=
\hat\eta_s^{[i]} - \hat\eta_s^{[j]}
\neq 0.
$

\item Term III:
$
\xi_{Q,s}^{(i)} - \xi_{Q,s}^{(j)}
=
\hat Q_s^{(i)} - \hat Q_s^{(j)}
\neq 0.
$
\end{itemize}

\textbf{Conclusion.}
The second-order remainder of the CATE estimator is the sum of three non-vanishing terms. Although these terms depend on common nuisance estimates, they are functionally distinct. There is therefore no structural mechanism that enforces exact cancellation among them. Consequently,
\[
\mathrm{Rem}^{(i),(j)} \neq 0
\quad\text{almost surely},
\]
which completes the proof.

\end{proof}


\subsection{Proof of Lemma~\ref{lemma:valid_conditioning}}\label{appx:proof_valid_conditioning}

\LemmaValidConditioning*

\begin{proof}
We show that the reparameterized $Q$-functions coincide pointwise with the standard ICE $Q$-functions, and therefore identify the same target parameter.

\textbf{Terminal step ($t=\tau$).}
At the final time step, there is no future policy tail. By definition,
\begin{equation}
    Q_\tau(A_\tau, H_\tau, \emptyset)
=
\mathbb{E}[Y \mid A_\tau, H_\tau]
=
Q^{\mathrm{ICE}}_\tau(A_\tau, H_\tau),\label{eq:terminal_agree}    
\end{equation}
so the two definitions agree.

\textbf{Inductive argument.}
Fix a deterministic counterfactual policy $\boldsymbol{\pi}^{(k)}$.  
We proceed by backward induction on $t \in \{\tau-1, \dots, 1\}$.

\emph{Base case ($t=\tau-1$).}
By the reparameterized definition,
\begin{align*}
Q_{\tau-1}(A_{\tau-1}, H_{\tau-1}, \pi^{(k)}_{\tau})
&=
\mathbb{E}\!\left[
Q_\tau(\pi^{(k)}_\tau(H_\tau), H_\tau, \emptyset)
\mid A_{\tau-1}, H_{\tau-1}, \pi^{(k)}_{\tau}
\right] \\
&=
\mathbb{E}\!\left[
Q_\tau(\pi^{(k)}_\tau(H_\tau), H_\tau)
\mid A_{\tau-1}, H_{\tau-1}
\right] \\
&=
\mathbb{E}\!\left[
Q^{\mathrm{ICE}}_\tau(\pi^{(k)}_\tau(H_\tau), H_\tau)
\mid A_{\tau-1}, H_{\tau-1}
\right] \\
&=
Q^{\mathrm{ICE}}_{\tau-1}(A_{\tau-1}, H_{\tau-1}).
\end{align*}
The second equality follows because $\pi^{(k)}_\tau$ is deterministic and
$Q_\tau(\cdot,\cdot)$ is the true outcome regression; conditioning on the fixed policy does not alter the conditional expectation given $(A_{\tau-1}, H_{\tau-1})$. The third equality uses Eq.\eqref{eq:terminal_agree}.

\emph{Inductive step.}
Assume that for some $t+1 \le \tau-1$,
\[
Q_{t+1}(A_{t+1}, H_{t+1}, \pi^{(k)}_{t+2:\tau})
=
Q^{\mathrm{ICE}}_{t+1}(A_{t+1}, H_{t+1})
\]
Then
\begin{align*}
Q_t(A_t, H_t, \pi^{(k)}_{t+1:\tau})
&=
\mathbb{E}\!\left[
Q_{t+1}(\pi^{(k)}_{t+1}(H_{t+1}), H_{t+1}, \pi^{(k)}_{t+2:\tau})
\mid A_t, H_t, \pi^{(k)}_{t+1:\tau}
\right] \\
&=
\mathbb{E}\!\left[
Q^{\mathrm{ICE}}_{t+1}(\pi^{(k)}_{t+1}(H_{t+1}), H_{t+1})
\mid A_t, H_t, \pi^{(k)}_{t+1:\tau}
\right] \\
&=
\mathbb{E}\!\left[
Q^{\mathrm{ICE}}_{t+1}(\pi^{(k)}_{t+1}(H_{t+1}), H_{t+1})
\mid A_t, H_t
\right] \\
&=
Q^{\mathrm{ICE}}_t(A_t, H_t),
\end{align*}
where the second equality uses the inductive hypothesis and the third equality uses the determinism of $\pi^{(k)}_{t+1}$.

By induction, the two constructions coincide for all $t=1,\dots,\tau$. Evaluating at $(\dot a^{(k)}_1, H_1)$ and taking expectations yields the claimed equality of target parameters.
\end{proof}

\subsection{Definition of Trajectory-level MMD}\label{appx:trajactory_mmd}
For a policy sequence $\pi_{1:t}$, the induced policy trajectory distribution is defined on the product space $\mathcal{Z}_1 \times\dots\times \mathcal{Z}_t$. We equip this space with the product kernel $\prod_{r=1}^t k(\cdot, z_r)$, which induces the product RKHS $\mathcal{F}_{1:t} = \bigotimes_{r=1}^t \mathcal{F}_r$. The corresponding mean embedding can be represented as:
$$\mu_{1:t} = \mathbb{E}_{z_{1}\sim P_{1}, \dots, z_{t}\sim P_{t}}[\prod_{r=1}^t k(\cdot, z_r)] \in \mathcal{F}_{1:t}.$$
Distances between policy sequences can then be measured by the MMD between their trajectory-level mean embeddings, $\|\mu_{1:t}^{(i)} - \mu_{1:t}^{(j)}\|_{\mathcal{F}_{1:t}}$, computed analogously to the single-step case.

\subsection{MMD analysis of policy sequences}

\begin{lemma}\label{lemma:subsequence_mmd}
Let $k$ be a normalized Gaussian kernel, i.e., $0<k(\cdot,\cdot)\le 1$ and $k(z,z)=1$. For any subsequence interval $1\le m\le n\le \tau$, there exists a constant $C>0$ such that
\[
\bigl\|\mu_{m:n}^{(i)}-\mu_{m:n}^{(j)}\bigr\|_{\mathcal F_{m:n}}
\;\le\;
C\cdot
\bigl\|\mu_{1:\tau}^{(i)}-\mu_{1:\tau}^{(j)}\bigr\|_{\mathcal F_{1:\tau}}.
\]
\end{lemma}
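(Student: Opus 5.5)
The plan is to exploit the tensor-product structure of the trajectory embedding from Appendix~\ref{appx:trajactory_mmd}. Since the product kernel factorizes and the expectation is taken under the product of the per-step laws $P_1,\dots,P_\tau$, the full embedding is an elementary tensor $\mu_{1:\tau}^{(i)}=\bigotimes_{r=1}^\tau \mu_r^{(i)}$. After reordering factors, which is an isometry of $\mathcal F_{1:\tau}$, I write it as $A^{(i)}\otimes B^{(i)}$. Here $A^{(i)}=\mu_{m:n}^{(i)}\in\mathcal F_{m:n}$ is the block we want to control, and $B^{(i)}=\bigotimes_{r\notin[m,n]}\mu_r^{(i)}$ collects the remaining steps. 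The normalized Gaussian kernel gives two facts. First, $\|\mu_r^{(i)}\|\le 1$. Second, every inner product $\langle \mu_r^{(i)},\mu_r^{(j)}\rangle=\mathbb E\,k(z,z')$ lies in $(0,1]$, and in particular $\|\mu_r^{(i)}\|>0$.

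The first step is the squared-norm identity
\[
\|A^{(i)}\otimes B^{(i)}-A^{(j)}\otimes B^{(j)}\|^2=\|A^{(i)}\|^2\|B^{(i)}\|^2+\|A^{(j)}\|^2\|B^{(j)}\|^2-2\langle A^{(i)},A^{(j)}\rangle\langle B^{(i)},B^{(j)}\rangle .
\]
Since $\langle A^{(i)},A^{(j)}\rangle>0$, Cauchy--Schwarz applied to the $B$-factor bounds this from below by $\bigl\|\,\|B^{(i)}\|A^{(i)}-\|B^{(j)}\|A^{(j)}\bigr\|^2$. This shows the full distance dominates a rescaled sub-block distance. The second step is to remove the scalings $b_i=\|B^{(i)}\|$ and $b_j=\|B^{(j)}\|$, which lie in $(0,1]$. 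I would use
\[
\|A^{(i)}-A^{(j)}\|\le b_i^{-1}\bigl(\|b_iA^{(i)}-b_jA^{(j)}\|+|b_i-b_j|\,\|A^{(j)}\|\bigr).
\]
One then tries to bound $|b_i-b_j|$ by the full distance, again via the identity above.

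I expect the main obstacle to be this second step. The scale ambiguity of elementary tensors is the issue: $A\otimes B=(\lambda A)\otimes(B/\lambda)$, so a purely Hilbert-space argument cannot rule out a small full distance coexisting with a non-negligible sub-block distance. Uniform-in-policy constants would therefore need extra structure. Available structure includes the lower bound $b_i\ge\prod_r\|\mu_r^{(i)}\|>0$ and the fact that the two policies share the same histories $h_r$ and differ only in actions.

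My fallback, which I would actually present, uses that the lemma only asks for some constant $C$ for the fixed, finite family of $K$ policies. Suppose first that $\|\mu_{1:\tau}^{(i)}-\mu_{1:\tau}^{(j)}\|=0$. The Gaussian kernel is characteristic, and a tensor product of characteristic Gaussian kernels is characteristic on the product space. Hence the product measures $\bigotimes_r P_r^{(i)}$ and $\bigotimes_r P_r^{(j)}$ coincide. Taking marginals then gives $P_r^{(i)}=P_r^{(j)}$ for every $r$, so the sub-block distance is also $0$. For pairs with nonzero full distance, the sub-block distance is at most $2$, because every embedding has norm at most $1$. So I would set
\[
C=\max\Bigl\{1,\ \max_{i,j:\,\|\mu_{1:\tau}^{(i)}-\mu_{1:\tau}^{(j)}\|>0}\ \|\mu_{m:n}^{(i)}-\mu_{m:n}^{(j)}\|\big/\|\mu_{1:\tau}^{(i)}-\mu_{1:\tau}^{(j)}\|\Bigr\}.
\]
This is a finite maximum of finite ratios. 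The Cauchy--Schwarz computation above would be kept as a remark giving the more explicit constant $C\approx 2/\min_i b_i$ whenever the norms $b_i$ are comparable.
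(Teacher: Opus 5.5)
Your fallback argument is correct and is essentially the paper's proof. The paper also handles the zero-distance case via characteristicness of the product Gaussian kernel and equality of marginals, and the positive-distance case by bounding the sub-block MMD by $2$ over a strictly positive denominator. Your version is slightly cleaner in bookkeeping: taking $\max\{1,\cdot\}$ over the finite family of pairs yields a single $C>0$ valid for all pairs, whereas the paper sets $C$ equal to the pairwise ratio, which depends on the pair and can even be $0$.
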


\begin{proof}
Fix $1\le m\le n\le \tau$ and two policies $\boldsymbol{\pi}^{(i)},\boldsymbol{\pi}^{(j)}$.
Write $P_{1:\tau}^{(i)}$ and $P_{1:\tau}^{(j)}$ for the trajectory distributions of $Z_{1:\tau}\in\mathcal Z_1\times\cdots\times\mathcal Z_\tau$
induced by $\boldsymbol{\pi}^{(i)}$ and $\boldsymbol{\pi}^{(j)}$, respectively, and let $P_{m:n}^{(i)},P_{m:n}^{(j)}$ denote the corresponding marginals on
$\mathcal Z_m\times\cdots\times\mathcal Z_n$.
Let $\mu_{1:\tau}^{(i)},\mu_{1:\tau}^{(j)}\in\mathcal F_{1:\tau}$ and $\mu_{m:n}^{(i)},\mu_{m:n}^{(j)}\in\mathcal F_{m:n}$ be the associated mean embeddings under the
product kernels.

We first note a uniform boundedness property implied by normalization.
For any distribution $P$ on a space $\mathcal Z$ with RKHS $\mathcal F$ induced by $k$,
the mean embedding $\mu_P=\mathbb E_{Z\sim P}[k(\cdot,Z)]$ satisfies
\[
\|\mu_P\|_{\mathcal F}^2
=\bigl\langle \mathbb E_{Z}[k(\cdot,Z)],\,\mathbb E_{Z'}[k(\cdot,Z')]\bigr\rangle_{\mathcal F}
=\mathbb E_{Z,Z'\sim P}\bigl[k(Z,Z')\bigr]
\le \mathbb E_{Z\sim P}\bigl[k(Z,Z)\bigr]
=1,
\]
where the second equality follows from the reproducing property of the RKHS, and the inequality uses $k(Z,Z')\le 1$.
Thus $\|\mu_P\|_{\mathcal F}\le 1$, and by the triangle inequality,
\begin{equation}\label{eq:mmd_upper_bound_2}
\|\mu_P-\mu_Q\|_{\mathcal F}\le \|\mu_P\|_{\mathcal F}+\|\mu_Q\|_{\mathcal F}\le 2
\qquad\text{for any }P,Q.
\end{equation}
The same argument applies to the product RKHS $\mathcal F_{m:n}$ induced by the product kernel
$k_{m:n}(z_{m:n},z'_{m:n})=\prod_{t=m}^n k_{\mathcal Z_t}(z_t,z_t')$, since $0<k_{m:n}\le 1$ and $k_{m:n}(z,z)=1$.

We now consider two cases.

\textbf{Case 1:} $\|\mu_{1:\tau}^{(i)}-\mu_{1:\tau}^{(j)}\|_{\mathcal F_{1:\tau}}=0$.
Because Gaussian kernels are characteristic, the product Gaussian kernel on $\mathcal Z_1\times\cdots\times\mathcal Z_\tau$ is also characteristic, hence
\[
\|\mu_{1:\tau}^{(i)}-\mu_{1:\tau}^{(j)}\|_{\mathcal F_{1:\tau}}=0
\;\Longrightarrow\;
P_{1:\tau}^{(i)} = P_{1:\tau}^{(j)}.
\]
Equality of the joint distributions implies equality of all marginals, so $P_{m:n}^{(i)}=P_{m:n}^{(j)}$.
Applying characteristicness again on $\mathcal Z_m\times\cdots\times\mathcal Z_n$ yields
\[
P_{m:n}^{(i)}=P_{m:n}^{(j)}
\;\Longrightarrow\;
\|\mu_{m:n}^{(i)}-\mu_{m:n}^{(j)}\|_{\mathcal F_{m:n}}=0.
\]
Therefore the desired inequality holds for any $C>0$ (e.g., take $C=1$).

\textbf{Case 2:} $\|\mu_{1:\tau}^{(i)}-\mu_{1:\tau}^{(j)}\|_{\mathcal F_{1:\tau}}>0$.
Define
\[
C
:=
\frac{\|\mu_{m:n}^{(i)}-\mu_{m:n}^{(j)}\|_{\mathcal F_{m:n}}}
{\|\mu_{1:\tau}^{(i)}-\mu_{1:\tau}^{(j)}\|_{\mathcal F_{1:\tau}}}.
\]
Then by construction,
\[
\|\mu_{m:n}^{(i)}-\mu_{m:n}^{(j)}\|_{\mathcal F_{m:n}}
=
C\cdot
\|\mu_{1:\tau}^{(i)}-\mu_{1:\tau}^{(j)}\|_{\mathcal F_{1:\tau}}.
\]
Moreover, $C$ is finite because the numerator is bounded by $2$ via Eq.~\eqref{eq:mmd_upper_bound_2}, while the denominator is strictly positive in this case.
Hence, there exists a finite constant $C>0$ satisfying the claimed inequality.

Combining the two cases completes the proof.
\end{proof}

\subsection{Assumption}\label{appx:regularity}
\begin{assumption}[Regularity]\label{assum:regularity}
We assume the true outcome model $Q_t^*$ and the counterfactual density product $\prod_{r=1}^t\eta_r$ (definition in Appendix~\ref{appx:proof_separate_model}) are Lipschitz continuous with respect to the MMD between policy-induced distributions of their relevant sub-trajectory. That is, for any two policies $\boldsymbol{\pi}^{(i)}, \boldsymbol{\pi}^{(j)}$ and any time $t$, there exist constants $L_{Q}, L_\eta < \infty$ such that:
\begin{align}
\|Q_t^{(i),*} - Q_t^{(j),*}\|_2 
&\leq L_{Q} \|\mu^{(i)}_{t+1:\tau} - \mu^{(j)}_{t+1:\tau}\|_{\mathcal{F}_{t+1:\tau}}, \label{eq:Q_lipchitz}\\
\Big\|\prod_{r=1}^t\eta_r^{(i)} - \prod_{r=1}^t\eta_r^{(j)}\Big\|_2 
&\leq L_\eta \|\mu^{(i)}_{1:t} - \mu^{(j)}_{1:t}\|_{\mathcal{F}_{1:t}}. \label{eq:eta_lipchitz}
\end{align}
\end{assumption}



\subsection{Proof of Theorem~\ref{theorem:smooth_remainer}}\label{appx:proof_continuity}
\TheoremSmoothRemainder*
\begin{proof}

We bound each component of $\mathrm{Rem}^{(i),(j)}$ in the decomposition(Eq.~\eqref{eq:remainder_decomposition}).
    
    \begin{itemize}
        \item  \textbf{Term I}:
By Cauchy--Schwarz,
$
\bigl|(W_s^{(i)}-W_s^{(j)})\,\xi_{G,s}^{[i]}\,\xi_{Q,s}^{(i)}\bigr|
\le
\|W_s^{(i)}-W_s^{(j)}\|_2 \,\|\xi_{G,s}^{[i]}\,\xi_{Q,s}^{(i)}\|_2.
$

Using the shared estimate $\hat\eta_r$ and factoring common terms,
\begin{align*}
\|W_s^{(i)}-W_s^{(j)}\|_2
&=
\left\|
\frac{1}{\eta_s^*}
\prod_{r=1}^s \frac{\eta_r^{(i)}}{\hat\eta_r}
-
\frac{1}{\eta_s^*}
\prod_{r=1}^s \frac{\eta_r^{(j)}}{\hat\eta_r}
\right\|_2 \\
&=
\left\|
\frac{1}{\eta_s^*}
\prod_{r=1}^s \frac{1}{\hat\eta_r}
\left(
\prod_{r=1}^s \eta_r^{(i)}-\prod_{r=1}^s \eta_r^{(j)}
\right)
\right\|_2 \\
&\le
\left\|\frac{1}{\eta_s^*}\prod_{r=1}^s \frac{1}{\hat\eta_r}\right\|_2
\left\|\prod_{r=1}^s \eta_r^{(i)}-\prod_{r=1}^s \eta_r^{(j)}\right\|_2 .
\end{align*}
By Assumption~\ref{assum:regularity} (Eq.~\eqref{eq:eta_lipchitz}),
\[
\left\|\prod_{r=1}^s \eta_r^{(i)}-\prod_{r=1}^s \eta_r^{(j)}\right\|_2
\le
L_\eta \,\|\mu^{(i)}_{1:s}-\mu^{(j)}_{1:s}\|_{\mathcal F_{1:s}}.
\]
Applying Lemma~\ref{lemma:subsequence_mmd} yields
$\|\mu^{(i)}_{1:s}-\mu^{(j)}_{1:s}\|_{\mathcal F_{1:s}}
\le
C\,\|\mu^{(i)}_{1:\tau}-\mu^{(j)}_{1:\tau}\|_{\mathcal F_{1:\tau}}$.
Therefore, there exists a finite constant $C_1$ such that
\[
\bigl|(W_s^{(i)}-W_s^{(j)})\,\xi_{G,s}^{[i]}\,\xi_{Q,s}^{(i)}\bigr|
\le
C_1\,\|\mu^{(i)}_{1:\tau}-\mu^{(j)}_{1:\tau}\|_{\mathcal F_{1:\tau}}.
\]
        \item \textbf{Term II: }
Under the shared modeling paradigm, the estimated treatment mechanism is
identical across counterfactual policies, i.e.,
$\xi_{G,s}^{(i)}=\xi_{G,s}^{(j)}$. Hence this term vanishes:
\[
W_s^{(j)}\,\xi_{Q,s}^{(i)}\,(\xi_{G,s}^{(i)}-\xi_{G,s}^{(j)}) = 0.
\]

\item \textbf{Term III:}
The final LTMLE step updates the initial estimate $\hat{Q}_{s}^{(k)}$ to a targeted estimate $\hat{Q}_{s}^{(k),\epsilon}$ by solving the score equation. We note that under the strict positivity assumption, the fluctuation magnitude $\epsilon$ is bounded. Hence, this bounded update preserves the local geometry of the initial estimate. Therefore, the continuity of the targeted estimate is controlled by the continuity of the initial neural network output:
        \begin{equation}
        \exists C_\epsilon < \infty, \|\hat{Q}^{(i),\epsilon_i}_s - \hat{Q}^{(j),\epsilon_j}_s\|_2 \leq C_\epsilon \|\hat{Q}^{(i)}_s - \hat{Q}^{(j)}_s\|_2.\label{eq:bounded_ltmle_update}
        \end{equation}
        This allows us to rely on the Lipschitz architecture of the initial $\hat{Q}$ to control Term III.
Then, by Cauchy--Schwarz and the triangle inequality,
\begin{align*}
\bigl|W_s^{(j)}\,\xi_{G,s}^{(j)}(\xi_{Q,s}^{(i)}-\xi_{Q,s}^{(j)})\bigr|
&\le
\|W_s^{(j)}\,\xi_{G,s}^{(j)}\|_2
\Bigl(
\|\hat Q_s^{(i),\epsilon_i}-\hat Q_s^{(j),\epsilon_j}\|_2
+
\|Q_s^{(i),*}-Q_s^{(j),*}\|_2
\Bigr) \\
&\le
\|W_s^{(j)}\,\xi_{G,s}^{(j)}\|_2
\Bigl(
C_\epsilon\|\hat Q_s^{(i)}-\hat Q_s^{(j)}\|_2
+
\|Q_s^{(i),*}-Q_s^{(j),*}\|_2
\Bigr).
\end{align*}
We next bound $\|\hat Q_s^{(i)}-\hat Q_s^{(j)}\|_2$ in terms of the trajectory-level
policy distance. By assumption, the parameterized outcome model is Lipschitz in
the policy embedding, $\tilde\rho(\rho_{s+1:\tau}^{(k)})$, we have
\begin{equation}
\label{eq:Q_lip_embedding}
\|\hat Q_s^{(i)}-\hat Q_s^{(j)}\|_2
\le
L_{\hat Q}\,\|\tilde\rho(\rho_{s+1:\tau}^{(i)})-\tilde\rho(\rho_{s+1:\tau}^{(j)})\|_2 .
\end{equation}
Moreover, the policy encoder $\tilde\rho$ is Lipschitz in its sequential inputs,
so
\begin{align}
\label{eq:encoder_lip_sum}
\|\tilde\rho(\rho_{s+1:\tau}^{(i)})-\tilde\rho(\rho_{s+1:\tau}^{(j)})\|_2
\le
L_{\tilde\rho}\sum_{r=s+1}^{\tau}\|\rho_r^{(i)}-\rho_r^{(j)}\|_2 .
\end{align}
By construction of the per-time-step policy embedding, the embedding distance
recovers the single-step MMD distance, i.e.,
\begin{equation}
\label{eq:rho_equals_mmd}
\|\rho_r^{(i)}-\rho_r^{(j)}\|_2
=
\|\mu_r^{(i)}-\mu_r^{(j)}\|_{\mathcal F_r}
\qquad\text{for each } r\in\{1,\dots,\tau\}.
\end{equation}
Combining \eqref{eq:Q_lip_embedding}--\eqref{eq:rho_equals_mmd} yields
\[
\|\hat Q_s^{(i)}-\hat Q_s^{(j)}\|_2
\le
L_{\hat Q}L_{\tilde\rho}\sum_{r=s+1}^{\tau}\|\mu_r^{(i)}-\mu_r^{(j)}\|_{\mathcal F_r}.
\]
Finally, we lift each single-step distance to the full trajectory distance using
Lemma~\ref{lemma:subsequence_mmd} with the interval $m=n=r$, which gives a
constant $C$ such that for all $r$,
\[
\|\mu_r^{(i)}-\mu_r^{(j)}\|_{\mathcal F_r}
\le
C\,\|\mu_{1:\tau}^{(i)}-\mu_{1:\tau}^{(j)}\|_{\mathcal F_{1:\tau}}.
\]
Therefore,
\begin{equation}
\label{eq:Qhat_lift_full_mmd}
\|\hat Q_s^{(i)}-\hat Q_s^{(j)}\|_2
\le
L_{\hat Q}L_{\tilde\rho}(\tau-s)\,C\,
\|\mu_{1:\tau}^{(i)}-\mu_{1:\tau}^{(j)}\|_{\mathcal F_{1:\tau}}.
\end{equation}

For the true outcome model difference $\|Q_s^{(i),*}-Q_s^{(j),*}\|_2$, Assumption~\ref{appx:regularity}
(Eq.~\eqref{eq:Q_lipchitz}) together with Lemma~\ref{lemma:subsequence_mmd} yields
\[
\|Q_s^{(i),*}-Q_s^{(j),*}\|_2
\le
L_Q\|\mu_{s+1:\tau}^{(i)}-\mu_{s+1:\tau}^{(j)}\|_{\mathcal F_{s+1:\tau}}
\le
L_Q C\|\mu_{1:\tau}^{(i)}-\mu_{1:\tau}^{(j)}\|_{\mathcal F_{1:\tau}}.
\]
Substituting the above bounds into the display at the beginning of Term~III shows that
\[
\bigl|W_s^{(j)}\,\xi_{G,s}^{(j)}(\xi_{Q,s}^{(i)}-\xi_{Q,s}^{(j)})\bigr|
\le
C_3\,
\|\mu_{1:\tau}^{(i)}-\mu_{1:\tau}^{(j)}\|_{\mathcal F_{1:\tau}}
\]
for some finite constant $C_3$.

    \end{itemize}

    \textbf{Conclusion.}
    Summing over $s$ in Eq.~\eqref{eq:remainder_decomposition}, Term~II is identically
    zero and Terms~I and~III are each bounded by a finite constant multiple of
    $\|\mu^{(i)}_{1:\tau}-\mu^{(j)}_{1:\tau}\|_{\mathcal F_{1:\tau}}$.
    Therefore, there exists $L_{\mathrm{R}}<\infty$ such that
    \[
    |\mathrm{Rem}^{(i),(j)}|
    \le
    L_{\mathrm{R}}\,
    \|\mu^{(i)}_{1:\tau}-\mu^{(j)}_{1:\tau}\|_{\mathcal F_{1:\tau}},
    \]
    which completes the proof.
    


\end{proof}

\section{Experiment Setting}




\subsection{Semi-Synthetic Data DGP}\label{appx:dgp}

\paragraph{Limited Semi-synthetic DGP}


We adopt the semi-synthetic DGP from DeepACE. Using the preprocessing pipeline of ~\citet{Wang_2020}, we extract 10 time-varying covariates $X_t \in \mathbb{R}^{10}$ over $\tau = 15$ time steps. These covariates are treated as observed patient states and are directly taken from real-world measurements.

Given the observed history, binary treatments $A_t \in \{0,1\}$ are simulated according to a stochastic, dynamic treatment policy. The treatment assignment probability at time $t$ depends on past covariates, past intermediate outcomes, and a treatment intensity variable $\ell_t = \ell_{t-1} + 2(A_t - 1)\bar{X}_t \tanh(Y_t)$, with initialization $\ell_0 = T/2 - 3$. Treatment is assigned as
\begin{equation*}
\pi_t = \mathds{1}\left\{\sigma\Bigg( \sum_{i=1}^{h} \frac{(-1)^i}{1 - i}\left(\bar{X}_{t-i} + \tanh(Y_{t-i}/2)\right)
- \tanh\!\left(\ell_{t-1} - \frac{T}{2}\right) + \epsilon_t^A \Bigg) > 0.5 \right\},
\end{equation*}

and outcomes are generated according to
\begin{equation*}
Y_{t+1}
= 5 \sum_{i=1}^{h} \frac{(-1)^i}{1 - i}
\tanh\!\left(
\sin(\bar{X}_{1:5,t-i} A_{t-i})
+ \cos(\bar{X}_{5:10,t-i} A_{t-i})
\right)
+ \epsilon_t^Y,
\end{equation*}
where $\bar{X}_{1:5,t-i}$ represent the mean of the first five covariates at step $t-i$, similarly for $\bar{X}_{5:10,t-i}$. The time lag $h=8$. Noise terms satisfy $\epsilon_t^A, \epsilon_t^Y \sim \mathcal{N}(0,0.5^2)$. 

Our target estimand is the terminal outcome $Y_{15}$. Intermediate outcomes $Y_t$ for $t<15$ are absorbed into $L_{t+1}$ (Section~\ref{sec:problem_setup}), yielding $L_t = (X_t, Y_{t-1})^\top \in \mathbb{R}^{11}$.


\paragraph{Expanded Semi-synthetic DGP}
In the simple DGP above, patient covariates are unaffected by treatment. To introduce a stronger treatment–confounder feedback loop, we further synthesize treatment-dependent time-varying covariates.

Specifically, we generate latent covariates $Z_t \in \mathbb{R}^{5}$ with initialization $Z_1 \sim \mathcal{N}(0, I)$ and dynamics
\begin{equation*}
Z_{t+1}
= \omega_1 \cdot Z_t
+ \omega_2 \cdot A_t  \sigma(Z_t^2)
+ \omega_3 \cdot \bar{X}_t
+ \epsilon_t^Z,
\end{equation*}
where $\epsilon_t^Z \sim \mathcal{N}(0,0.3^2)$ and the coefficients $\omega = (\omega_1,\omega_2,\omega_3)$ are randomly drawn and fixed to (0.37,0.42,0.29). These synthesized covariates both affect and are affected by treatment.

We concatenate $Z_t$ with the observed covariates to form the augmented state
\[
X_t^\dagger = (X_t, Z_t)^\top \in \mathbb{R}^{15}.
\]

Treatment assignment and outcome generation follow the same structural equations as in the Limited Semi-synthetic DGP, with $X_t$ replaced by $X_t^\dagger$. Noise distributions remain unchanged.

\subsection{Counterfactuals}\label{appx:subsec_cf}

We consider two types of counterfactual treatment regimes:
\begin{itemize}
    \item (a) Deterministic treatment sequences. When counterfactuals correspond to fixed binary treatment values at each time step, we adapt the setup of DeepACE. The baseline policy is the \emph{always-treat} policy, under which treatment is always administered (i.e., $A_t = 1$ for all $\tau=15$ time steps). In addition to this baseline, we evaluate three deterministic counterfactual sequences: (3) treatment is never administered at any time step; (2) treatment is initiated at time step 5 and continued until the end; and (3) treatment is administered during the first 10 steps and then discontinued. In results, we use CF~1a, CF~2a, and CF~3a to denote the CATE between the always-treat policy and the three counterfactual policies, respectively.
    \item (b)  Dynamic treatment policies. We consider deterministic, time-varying treatment policies in experiments. These policies are constructed by varying the threshold in the treatment assignment rule while keeping the functional form fixed. Specifically, for the limited semi-synthetic DGP, we define a policy sequence $\boldsymbol{\pi} = (\pi_1, \dots, \pi_\tau)$, where each decision rule $\pi_t$ is parameterized by a threshold $\gamma$:
    
    \begin{equation}
        \pi_t = \mathds{1}\left\{\sigma\Bigg( \sum_{i=1}^{h} \frac{(-1)^i}{1 - i}\left(\bar{X}_{t-i} + \tanh(Y_{t-i}/2)\right)
- \tanh\!\left(\ell_{t-1} - \frac{T}{2}\right)\Bigg) > \gamma \right\}.\label{eq:policy_cf_functional}
    \end{equation}
    The policy in the expanded DGP also follows the same functional form, but the covariates $X_t$ are replaced by the augmented covariates $X_t^\dagger$. 


    In both the limited and expanded semi-synthetic DGPs, we use the policy $\gamma=0.5$ across all time steps as the baseline policy. All reported results correspond to contrasts between each counterfactual policy and this baseline.

    We consider two scenarios with different levels of policy similarity:
    
    (1) \emph{Partial deviation:} counterfactual policies differ from the baseline only in the first two time steps. Specifically, CF 1b corresponds to the contrast between the baseline policy and a policy with $\gamma=0.4$ for the first two steps and $\gamma=0.5$ thereafter, while CF 2b corresponds to $\gamma=0.6$ for the first two steps and $\gamma=0.5$ thereafter. In other words, these counterfactuals share identical actions with the baseline for the last 13 steps. 
    
    (2) \emph{Full deviation:} counterfactual policies differ from the baseline at all time steps. CF 1c and CF 2c correspond to the contrasts with constant policies $\gamma=0.4$ and $\gamma=0.6$, respectively. In the ablation study, we further include two extreme policies, with CF 3c and CF 4c corresponding to $\gamma=0$ (“always treat”) and $\gamma=1$ (“never treat”) at all time steps.


\end{itemize}

\subsection{Model Implementation}\label{appx:implementation}

We implement G-computation and LTMLE with Super Learner using the \texttt{ltmle} R package, setting \texttt{gcomp=TRUE} for the former. Super Learner constructs an optimally weighted ensemble of base learners via $k$-fold cross-validation; throughout, we set $k=3$ and include a generalized linear model, random forest, XGBoost, and a generalized additive model (GAM) with regression splines as candidate learners. 

DeepACE is implemented using the authors' original codebase~\cite{frauen2023estimating}. 
For DeepLTMLE, we follow the original paper to derive our own implementation. 

For PEQ-Net, we extend DeepLTMLE with our policy encoder module. To construct policy embedding, we compute the pairwise MMD distance using Gaussian kernel, and apply metric MDS from \texttt{scikit-learn} to obtain the per-time-step policy encoding. The kernel bandwidth is selected using a median heuristic: at each time step, we randomly sample 500 history–action pairs, compute all pairwise Euclidean distances, and take their median; the bandwidth is then set to $1/(2*\text{median})$. 
The number of layers and hidden dimensions of the policy encoder (implemented as an RNN) are treated as hyperparameters; here, both the Transformer block and the policy encoder share the hyperparameter 'number of layers'. 

All deep learning models, including DeepACE, DeepLTMLE, and our proposed method, are trained for 500 epochs.

For all baseline models, including G-computation and LTMLE with super learner, DeepACE, and DeepLTMLE, we follow their original implementation and fit separate models to estimate the CAPO for each counterfactual. For our model, we fit one model to estimate the CAPO for all counterfactuals. Finally, we take the difference between these CAPOs to derive CATE estimates.

\subsection{Hyperparameter Tuning}\label{appx:hyperparam}
We use the random search strategy to tune the hyperparameters for all models. Since we only have factual data, we select hyperparameter by minimizing the factual loss: $L_{mse}(Q(A_\tau, H_\tau),Y) + \sum_{t=1}^\tau L_{bce}(G(H_t),A_t)$. The hyperparameter grid is shown in Table~\ref{tab:hparam_grid}.

\begin{table}[h]
\centering
\caption{Hyperparameter tuning grid used for all datasets and counterfactual settings.}
\label{tab:hparam_grid}
\renewcommand{\arraystretch}{1.1}
\setlength{\tabcolsep}{6pt}
\begin{tabular}{lccc}
\toprule
\textbf{Hyperparameter}
& \textbf{DeepACE}
& \textbf{DeepLTMLE}
& \textbf{PEQ-Net} \\
\midrule

Batch size
& \{128, 256\}
& \{128, 256\}
& \{128, 256\}\\

Learning rate
& \{5e{-}4, 1e{-}3, 5e{-}3\}
& \{5e{-}4, 1e{-}3, 5e{-}3\}
& \{5e{-}4, 1e{-}3, 5e{-}3\} \\

Hidden size
& \{8, 16, 32\}
& \{8, 16, 32\}
& \{8, 16, 32\} \\

Dropout rate
& \{0.0, 0.1\}
& \{0.0, 0.1\}
& \{0.0, 0.1\} \\

Number of layers
& ---
& \{1, 2, 3\}
& \{1, 2, 3\}\\

Number of heads
& ---
& \{2, 4\}
& \{2, 4\} \\

$\alpha$
& ---
& \{0.05, 0.1\}
& \{0.05, 0.1\}\\

\shortstack{Hidden size of \\ policy encoder}
& ---
& ---
& \{8,16\}
\\
\bottomrule
\end{tabular}\label{tab:hyperparam}
\end{table}

\subsection{Ablation: ICE G-computation only}\label{appx:ice_only}
In this ablation study, we followed the exact same strategy in Section~\ref{appx:hyperparam} to train DeepLTMLE and PEQ-Net on the same datasets. After training, we don't conduct the LTMLE debiasing step but directly take the average of $Q_1(\pi(H_1),H_1)$ as the CAPO estimates, following the original ICE G-computation procedure. Results are shown in Table~\ref{tab:results-semi-syn-ice-only} and \ref{tab:results-semi-syn-01-combined-ice-only}.

\begin{table*}[t]
\centering
\caption{Ablation: Isolating Improvements in the Outcome Regression ($Q$) with counterfactuals of dynamic treatment policy.}
\label{tab:results-semi-syn-ice-only}
\resizebox{0.8\linewidth}{!}{%
\begin{tabular}{l c cc cc cc cc}
\toprule
& & \multicolumn{4}{c}{\textbf{Policy differ only in the first two steps}} 
& \multicolumn{4}{c}{\textbf{Policy differ in all steps}} \\
\cmidrule(lr){3-6} \cmidrule(lr){7-10}

\textbf{DGP} & \textbf{Model} &
\multicolumn{2}{c}{\textbf{Bias}} &
\multicolumn{2}{c}{\textbf{RMSE}} & \multicolumn{2}{c}{\textbf{Bias}} &
\multicolumn{2}{c}{\textbf{RMSE}} \\
& & CF 1b & CF 2b
& CF 1b & CF 2b 
& CF 1c & CF 2c
& CF 1c & CF 2c\\
\midrule
\multirow{2}{*}{Limited}
& D.LTMLE  & 0.0998$\pm$0.0780 & 0.1007$\pm$0.1070 & 0.1255 & 0.1450 & 0.2316 $\pm$ 0.1522 & 0.2335 $\pm$0.1423 & 0.2750 & 0.2716\\
& PEQ-Net  & \textbf{0.0015$\pm$0.0021} & \textbf{0.0132$\pm$0.0070} & \textbf{0.0026} & \textbf{0.0148} & \textbf{0.0957 $\pm$0.0770} &\textbf{ 0.1410 $\pm$ 0.0746} & \textbf{0.1216} & \textbf{0.1586} \\
\rowcolor{gray!6}
& D.LTMLE  & 0.0906$\pm$0.0887 & 0.1270$\pm$0.1449 & 0.1252 & 0.1899 & 0.3751 $\pm$ 0.2970 & 0.4883 $\pm$ 0.3064 & 0.4738 & 0.5724\\
\rowcolor{gray!6}
\multirow{-2}{*}{Expanded}
& PEQ-Net  & \textbf{0.0003$\pm$0.0001} & \textbf{0.0030$\pm$0.0017} & \textbf{0.0003} & \textbf{0.0034} & \textbf{0.0916$\pm$0.0569} & \textbf{0.2726$\pm$0.0881} & \textbf{0.1071} & \textbf{0.2858}\\
\bottomrule
\end{tabular}
}
\vspace{-\baselineskip}
\end{table*}
\vspace{15pt}

\begin{table*}[t]
\centering
\caption{Ablation: Isolating Improvements in the Outcome Regression ($Q$) with counterfactuals of deterministic sequences.}
\label{tab:results-semi-syn-01-combined-ice-only}

\resizebox{\linewidth}{!}{%
\begin{tabular}{l
 ccc ccc
 ccc ccc}
\toprule
& \multicolumn{6}{c}{\textbf{Limited Time-varying Confounding}} 
& \multicolumn{6}{c}{\textbf{Expanded Time-varying Confounding}} \\
\cmidrule(lr){2-7} \cmidrule(lr){8-13}

\textbf{Model}
& \multicolumn{3}{c}{\textbf{Bias}}
& \multicolumn{3}{c}{\textbf{RMSE}}
& \multicolumn{3}{c}{\textbf{Bias}}
& \multicolumn{3}{c}{\textbf{RMSE}} \\

& CF 1a & CF 2a & CF 3a
& CF 1a & CF 2a & CF 3a
& CF 1a & CF 2a & CF 3a
& CF 1a & CF 2a & CF 3a \\
\midrule
\rowcolor{gray!6}
D.LTMLE ($Q$ only)
& 0.78$\pm$0.65 & 0.68$\pm$0.56 & 0.92$\pm$0.95
& 1.00 & 0.88 & 1.31
& 0.80$\pm$0.76 & 1.10$\pm$0.85 & 0.58$\pm$0.58
& 1.09 & 1.37 & 0.81 \\

PEQ-Net ($Q$ only)
& \textbf{0.53$\pm$0.38} & \textbf{0.27$\pm$0.24} & \textbf{0.21$\pm$0.19}
& \textbf{0.65} & \textbf{0.36} & \textbf{0.28}
& \textbf{0.63$\pm$0.36} & \textbf{0.20$\pm$0.17} & \textbf{0.26$\pm$0.22}
& \textbf{0.72} & \textbf{0.26} & \textbf{0.33} \\

\bottomrule
\end{tabular}
}
\end{table*}

\subsection{Ablation: Comparison with finetuning and multi-Q-head strategy of DeepLTMLE}\label{appx:dltmle_finetune}
This ablation study is conducted on the counterfactuals with dynamic treatment policies using the dataset with expanded time-varying confounding. 

For finetuning, we first train the DeepLTMLE model for the behavioral policy (threshold=0.5 for all $\tau=15$ time steps) on the combined training set, yielding the CAPO for this sequence. Then we freeze the parameters in the transformer block and the $G$-head but leave the $Q$-head trainable, and train the model on the remaining counterfactual sequence one by one. 

For the multi-Q-head architecture, we modify DeepLTMLE: the transformer block and the $G$-head are unchanged, and multiple $Q$-heads are initiated, one for each policy. The model is jointly trained on all policies.

Finally, we obtain CATE by differencing the CAPOs. These experiments were repeated on data generated with 20 different random seeds, and the mean absolute bias is reported. 

\subsection{Additional analysis on policy embedding}\label{appx:additional_embedding}

\textbf{Visualization of policy embedding} To inspect whether our policy embedding procedure can reflect the similarity among policies, we embed five counterfactual policies with different thresholds $\gamma$ (see Eq~\eqref{eq:policy_cf_functional} for their functional form) across 15 time steps. Figure~\ref{fig:policy_embedding_pca} shows a consistent geometric structure across time steps: policies with similar intervention patterns are mapped to nearby representations, while more dissimilar policies are clearly separated. This indicates that the embedding captures meaningful relationships between policies in a structured manner.

\begin{figure}
    \centering
    \includegraphics[width=1\linewidth]{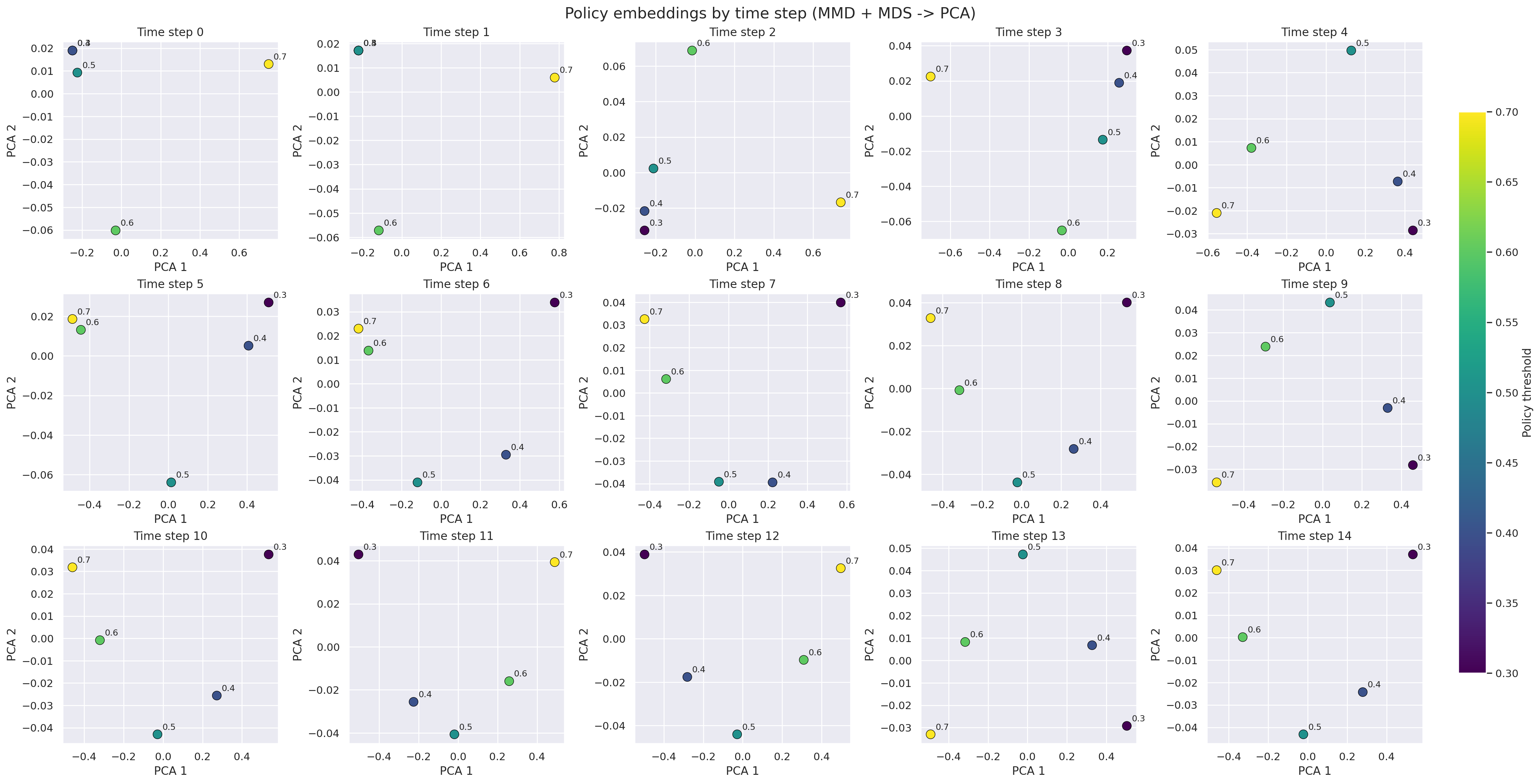}
    \caption{Visualization of policy embedding across time steps via PCA}
    \label{fig:policy_embedding_pca}
\end{figure}

\textbf{Sensitivity to kernel bandwidth choice} In our policy embedding, the kernel bandwidth is selected using a median heuristic: at each time step, we randomly sample 500 history–action pairs, compute all pairwise Euclidean distances, and take their median; the bandwidth is then set to $1/(2*\text{median})$. For sensitivity analysis, we vary it as $\gamma/(2*\text{median})$ with $\gamma \in \{0.01,0.1,1,10,100\}$. Table~\ref{tab:kernel_bandwidth_ablation} shows that RMSE on the expanded DGP remains stable across settings, indicating PEQ-Net's performance is robust to the bandwidth choices.

\begin{table}[t]
\centering
\begin{minipage}[t]{0.48\linewidth}
\centering
\caption{PEQ-Net with varying kernel bandwidth for policy embedding. Results are aggregated over 10 seeds.}
\label{tab:kernel_bandwidth_ablation}
\begin{tabular}{l cc}
\toprule
\textbf{$\gamma$} &
\multicolumn{2}{c}{\textbf{RMSE}} \\
& CF 1c & CF 2c \\
\midrule
0.01 & 0.21 & 0.23 \\
0.1  & 0.22 & 0.27 \\
1    & 0.17 & 0.23 \\
10   & 0.21 & 0.24 \\
100  & 0.21 & 0.21 \\
\bottomrule
\end{tabular}
\end{minipage}
\hfill
\begin{minipage}[t]{0.48\linewidth}
\centering
\caption{Runtime of policy embedding with respect to the number of policies $K$.}
\label{tab:scalability_wrt_K}
\begin{tabular}{cc}
\toprule
\textbf{$K$} & \textbf{Runtime} \\
\midrule
3  & 2.68 s \\
5  & 4.36 s \\
10 & 13.09 s \\
20 & 53.18 s \\
\bottomrule
\end{tabular}
\end{minipage}
\end{table}

\textbf{Scalability with respect to the number of policies} We run the policy embedding procedure with a varying number of policies $K$. Table~\ref{tab:scalability_wrt_K} confirms its quadratic computational complexity with respect to $K$.

%% file: reference.bib
@article{evans2021surviving,
  title={Surviving sepsis campaign: international guidelines for management of sepsis and septic shock 2021},
  author={Evans, Laura and Rhodes, Andrew and Alhazzani, Waleed and Antonelli, Massimo and Coopersmith, Craig M and French, Craig and Machado, Fl{\'a}via R and Mcintyre, Lauralyn and Ostermann, Marlies and Prescott, Hallie C and others},
  journal={Critical care medicine},
  volume={49},
  number={11},
  pages={e1063--e1143},
  year={2021},
  publisher={LWW}
}

@article{asfar2014high,
  title={High versus low blood-pressure target in patients with septic shock},
  author={Asfar, Pierre and Meziani, Ferhat and Hamel, Jean-Fran{\c{c}}ois and Grelon, Fabien and Megarbane, Bruno and Anguel, Nadia and Mira, Jean-Paul and Dequin, Pierre-Fran{\c{c}}ois and Gergaud, Soizic and Weiss, Nicolas and others},
  journal={New England Journal of Medicine},
  volume={370},
  number={17},
  pages={1583--1593},
  year={2014},
  publisher={Mass Medical Soc}
}

@article{PhysioNet-mimiciii-1.4,
  author = {Johnson, Alistair and Pollard, Tom and Mark, Roger},
  title = {{MIMIC-III Clinical Database}},
  journal = {{PhysioNet}},
  year = {2016},
  month = sep,
  note = {Version 1.4},
  doi = {10.13026/C2XW26},
  url = {https://doi.org/10.13026/C2XW26}
}

@book{borg2005modern,
  title={Modern multidimensional scaling: Theory and applications},
  author={Borg, Ingwer and Groenen, Patrick JF},
  year={2005},
  publisher={Springer}
}

@article{kruskal1964multidimensional,
  title={Multidimensional scaling by optimizing goodness of fit to a nonmetric hypothesis},
  author={Kruskal, Joseph B},
  journal={Psychometrika},
  volume={29},
  number={1},
  pages={1--27},
  year={1964},
  publisher={Springer-Verlag}
}

@misc{newey2018crossfittingfastremainderrates,
      title={Cross-Fitting and Fast Remainder Rates for Semiparametric Estimation}, 
      author={Whitney K. Newey and James R. Robins},
      year={2018},
      eprint={1801.09138},
      archivePrefix={arXiv},
      primaryClass={math.ST},
      url={https://arxiv.org/abs/1801.09138}, 
}

@article{guo2024estimating,
  title={Estimating the treatment effect over time under general interference through deep learner integrated TMLE},
  author={Guo, Suhan and Shen, Furao and Li, Ni},
  journal={arXiv preprint arXiv:2412.04799},
  year={2024}
}

@article{rosenblum2010targeted,
  title={Targeted maximum likelihood estimation of the parameter of a marginal structural model},
  author={Rosenblum, Michael and Van Der Laan, Mark J},
  journal={The international journal of biostatistics},
  volume={6},
  number={2},
  pages={19},
  year={2010}
}

@inproceedings{cho2024kernel,
  title={Kernel Debiased Plug-in Estimation: Simultaneous, Automated Debiasing without Influence Functions for Many Target Parameters},
  author={Cho, Brian M and Mukhin, Yaroslav and Gan, Kyra and Malenica, Ivana},
  booktitle={International Conference on Machine Learning},
  pages={8534--8555},
  year={2024},
  organization={PMLR}
}

@article{stitelman2012general,
  title={A General Implementation of TMLE for Longitudinal Data Applied to Causal Inference in Survival Analysis},
  author={Stitelman, Ori M and De Gruttola, Victor and van der Laan, Mark J},
  journal={The International Journal of Biostatistics},
  volume={8},
  number={1},
  pages={1--39},
  year={2012},
  publisher={De Gruyter}
}

@book{van2011targeted,
  title={Targeted learning: causal inference for observational and experimental data},
  author={Van der Laan, Mark J and Rose, Sherri and others},
  volume={4},
  year={2011},
  publisher={Springer}
}

@book{chakraborty2013statistical,
  title={Statistical methods for dynamic treatment regimes},
  author={Chakraborty, Bibhas and Moodie, Erica EM},
  volume={2},
  year={2013},
  publisher={Springer}
}

@article{robins1986new,
  title={A new approach to causal inference in mortality studies with a sustained exposure period—application to control of the healthy worker survivor effect},
  author={Robins, James},
  journal={Mathematical modelling},
  volume={7},
  number={9-12},
  pages={1393--1512},
  year={1986},
  publisher={Elsevier}
}

@misc{hernan2010causal,
  title={Causal inference},
  author={Hern{\'a}n, Miguel A and Robins, James M},
  year={2010},
  publisher={CRC Boca Raton, FL}
}

@incollection{robins2000marginal,
  title={Marginal structural models versus structural nested models as tools for causal inference},
  author={Robins, James M},
  booktitle={Statistical models in epidemiology, the environment, and clinical trials},
  pages={95--133},
  year={2000},
  publisher={Springer}
}

@article{bang2005doubly,
  title={Doubly robust estimation in missing data and causal inference models},
  author={Bang, Heejung and Robins, James M},
  journal={Biometrics},
  volume={61},
  number={4},
  pages={962--973},
  year={2005},
  publisher={Oxford University Press}
}

@article{snowden2011implementation,
  title={Implementation of G-computation on a simulated data set: demonstration of a causal inference technique},
  author={Snowden, Jonathan M and Rose, Sherri and Mortimer, Kathleen M},
  journal={American journal of epidemiology},
  volume={173},
  number={7},
  pages={731--738},
  year={2011},
  publisher={Oxford University Press}
}

@inproceedings{seedat2022continuous,
  title={Continuous-Time Modeling of Counterfactual Outcomes Using Neural Controlled Differential Equations},
  author={Seedat, Nabeel and Imrie, Fergus and Bellot, Alexis and Qian, Zhaozhi and van der Schaar, Mihaela},
  booktitle={International Conference on Machine Learning},
  pages={19497--19521},
  year={2022},
  organization={PMLR}
}

@inproceedings{bica2020estimating,
  title={Estimating counterfactual treatment outcomes over time through adversarially balanced representations},
  author={Bica, Ioana and Alaa, Ahmed M and Jordon, James and van der Schaar, Mihaela},
  booktitle={International Conference on Learning Representations},
year={2020}
}

@article{murphy2003optimal,
  title={Optimal dynamic treatment regimes},
  author={Murphy, Susan A},
  journal={Journal of the Royal Statistical Society Series B: Statistical Methodology},
  volume={65},
  number={2},
  pages={331--355},
  year={2003},
  publisher={Oxford University Press}
}

@article{JMLR:v13:gretton12a,
  author  = {Arthur Gretton and Karsten M. Borgwardt and Malte J. Rasch and Bernhard Sch{{\"o}}lkopf and Alexander Smola},
  title   = {A Kernel Two-Sample Test},
  journal = {Journal of Machine Learning Research},
  year    = {2012},
  volume  = {13},
  number  = {25},
  pages   = {723-773},
  url     = {http://jmlr.org/papers/v13/gretton12a.html}
}

@article{diaz2023nonparametric,
  title={Nonparametric causal effects based on longitudinal modified treatment policies},
  author={D{\'\i}az, Iv{\'a}n and Williams, Nicholas and Hoffman, Katherine L and Schenck, Edward J},
  journal={Journal of the American Statistical Association},
  volume={118},
  number={542},
  pages={846--857},
  year={2023},
  publisher={Taylor \& Francis}
}

@article{scikit-learn,
  title={Scikit-learn: Machine Learning in {P}ython},
  author={Pedregosa, F. and Varoquaux, G. and Gramfort, A. and Michel, V.
          and Thirion, B. and Grisel, O. and Blondel, M. and Prettenhofer, P.
          and Weiss, R. and Dubourg, V. and Vanderplas, J. and Passos, A. and
          Cournapeau, D. and Brucher, M. and Perrot, M. and Duchesnay, E.},
  journal={Journal of Machine Learning Research},
  volume={12},
  pages={2825--2830},
  year={2011}
}

@inproceedings{frauen2025model,
  title={Model-agnostic meta-learners for estimating heterogeneous treatment effects over time},
  author={Frauen, Dennis and Hess, Konstantin and Feuerriegel, Stefan},
  booktitle={International Conference on Learning Representations},
  volume={2025},
  pages={85750--85779},
  year={2025}
}

@article{robins2008estimation,
  title={Estimation of the causal effects of time-varying exposures},
  author={Robins, James and Hernan, Miguel},
  journal={Chapman \& Hall/CRC Handbooks of Modern Statistical Methods},
  pages={553--599},
  year={2008},
  publisher={Chapman and Hall/CRC}
}

@misc{chernozhukov2018double,
  title={Double/debiased machine learning for treatment and structural parameters},
  author={Chernozhukov, Victor and Chetverikov, Denis and Demirer, Mert and Duflo, Esther and Hansen, Christian and Newey, Whitney and Robins, James},
  year={2018},
  publisher={Oxford University Press Oxford, UK}
}

@article{lim2018forecasting,
  title={Forecasting treatment responses over time using recurrent marginal structural networks},
  author={Lim, Bryan},
  journal={Advances in neural information processing systems},
  volume={31},
  year={2018}
}

@inproceedings{melnychuk2022causal,
  title={Causal transformer for estimating counterfactual outcomes},
  author={Melnychuk, Valentyn and Frauen, Dennis and Feuerriegel, Stefan},
  booktitle={International conference on machine learning},
  pages={15293--15329},
  year={2022},
  organization={PMLR}
}

@article{sejdinovic2013equivalence,
  title={Equivalence of distance-based and RKHS-based statistics in hypothesis testing},
  author={Sejdinovic, Dino and Sriperumbudur, Bharath and Gretton, Arthur and Fukumizu, Kenji},
  journal={The annals of statistics},
  pages={2263--2291},
  year={2013},
  publisher={JSTOR}
}

@article{chwialkowski2015fast,
  title={Fast two-sample testing with analytic representations of probability measures},
  author={Chwialkowski, Kacper P and Ramdas, Aaditya and Sejdinovic, Dino and Gretton, Arthur},
  journal={Advances in Neural Information Processing Systems},
  volume={28},
  year={2015}
}

@article{gretton2012optimal,
  title={Optimal kernel choice for large-scale two-sample tests},
  author={Gretton, Arthur and Sejdinovic, Dino and Strathmann, Heiko and Balakrishnan, Sivaraman and Pontil, Massimiliano and Fukumizu, Kenji and Sriperumbudur, Bharath K},
  journal={Advances in neural information processing systems},
  volume={25},
  year={2012}
}

@article{rahimi2007random,
  title={Random features for large-scale kernel machines},
  author={Rahimi, Ali and Recht, Benjamin},
  journal={Advances in neural information processing systems},
  volume={20},
  year={2007}
}

@article{xu2023timing,
  title={Timing of vasopressin initiation and mortality in patients with septic shock: analysis of the MIMIC-III and MIMIC-IV databases},
  author={Xu, Jun and Cai, Hongliu and Zheng, Xia},
  journal={BMC Infectious Diseases},
  volume={23},
  number={1},
  pages={199},
  year={2023},
  publisher={Springer}
}

@article{shahn2020fluid,
  title={Fluid-limiting treatment strategies among sepsis patients in the ICU: a retrospective causal analysis},
  author={Shahn, Zach and Shapiro, Nathan I and Tyler, Patrick D and Talmor, Daniel and Lehman, Li-wei H},
  journal={Critical Care},
  volume={24},
  number={1},
  pages={62},
  year={2020},
  publisher={Springer}
}

@article{nguyen2010early,
  title={Early lactate clearance is associated with biomarkers of inflammation, coagulation, apoptosis, organ dysfunction and mortality in severe sepsis and septic shock},
  author={Nguyen, H Bryant and Loomba, Manisha and Yang, James J and Jacobsen, Gordon and Shah, Kant and Otero, Ronny M and Suarez, Arturo and Parekh, Hemal and Jaehne, Anja and Rivers, Emanuel P},
  journal={Journal of inflammation},
  volume={7},
  number={1},
  pages={6},
  year={2010},
  publisher={Springer}
}

@inproceedings{farajtabar2018more,
  title={More robust doubly robust off-policy evaluation},
  author={Farajtabar, Mehrdad and Chow, Yinlam and Ghavamzadeh, Mohammad},
  booktitle={International Conference on Machine Learning},
  pages={1447--1456},
  year={2018},
  organization={PMLR}
}

@article{taubman2009intervening,
  title={Intervening on risk factors for coronary heart disease: an application of the parametric g-formula},
  author={Taubman, Sarah L and Robins, James M and Mittleman, Murray A and Hern{\'a}n, Miguel A},
  journal={International journal of epidemiology},
  volume={38},
  number={6},
  pages={1599--1611},
  year={2009},
  publisher={Oxford University Press}
}

@inproceedings{li2021g,
  title={G-net: a recurrent network approach to g-computation for counterfactual prediction under a dynamic treatment regime},
  author={Li, Rui and Hu, Stephanie and Lu, Mingyu and Utsumi, Yuria and Chakraborty, Prithwish and Sow, Daby M and Madan, Piyush and Li, Jun and Ghalwash, Mohamed and Shahn, Zach and others},
  booktitle={Machine Learning for Health},
  pages={282--299},
  year={2021},
  organization={PMLR}
}

@inproceedings{bibaut2019more,
  title={More efficient off-policy evaluation through regularized targeted learning},
  author={Bibaut, Aurelien and Malenica, Ivana and Vlassis, Nikos and Van Der Laan, Mark},
  booktitle={International Conference on Machine Learning},
  pages={654--663},
  year={2019},
  organization={PMLR}
}

@inproceedings{frauen2023estimating,
  title={Estimating average causal effects from patient trajectories},
  author={Frauen, Dennis and Hatt, Tobias and Melnychuk, Valentyn and Feuerriegel, Stefan},
  booktitle={Proceedings of the AAAI Conference on Artificial Intelligence},
  volume={37},
  number={6},
  pages={7586--7594},
  year={2023}
}

@article{shirakawa2024longitudinal,
  title={Longitudinal targeted minimum loss-based estimation with temporal-difference heterogeneous transformer},
  author={Shirakawa, Toru and Li, Yi and Wu, Yulun and Qiu, Sky and Li, Yuxuan and Zhao, Mingduo and Iso, Hiroyasu and Van Der Laan, Mark},
  journal={Proceedings of machine learning research},
  volume={235},
  pages={45097},
  year={2024}
}

@inproceedings{hess2024igc,
  title={Igc-net for conditional average potential outcome estimation over time},
  author={Hess, Konstantin and Frauen, Dennis and Melnychuk, Valentyn and Feuerriegel, Stefan},
  booktitle={The Fourteenth International Conference on Learning Representations},
  year={2024}
}

@article{wen2021parametric,
  title={Parametric g-formula implementations for causal survival analyses},
  author={Wen, Lan and Young, Jessica G and Robins, James M and Hern{\'a}n, Miguel A},
  journal={Biometrics},
  volume={77},
  number={2},
  pages={740--753},
  year={2021},
  publisher={Oxford University Press}
}

@article{hernan2006estimating,
  title={Estimating causal effects from epidemiological data},
  author={Hern{\'a}n, Miguel A and Robins, James M},
  journal={Journal of Epidemiology \& Community Health},
  volume={60},
  number={7},
  pages={578--586},
  year={2006},
  publisher={BMJ Publishing Group Ltd}
}

@article{cole2008constructing,
  title={Constructing inverse probability weights for marginal structural models},
  author={Cole, Stephen R and Hern{\'a}n, Miguel A},
  journal={American journal of epidemiology},
  volume={168},
  number={6},
  pages={656--664},
  year={2008},
  publisher={Oxford University Press}
}

@article{rosenbaum1983central,
  title={The central role of the propensity score in observational studies for causal effects},
  author={Rosenbaum, Paul R and Rubin, Donald B},
  journal={Biometrika},
  volume={70},
  number={1},
  pages={41--55},
  year={1983},
  publisher={Oxford University Press}
}

@article{rosenbaum1984reducing,
  title={Reducing bias in observational studies using subclassification on the propensity score},
  author={Rosenbaum, Paul R and Rubin, Donald B},
  journal={Journal of the American statistical Association},
  volume={79},
  number={387},
  pages={516--524},
  year={1984},
  publisher={Taylor \& Francis}
}

@article{xu2010use,
  title={Use of stabilized inverse propensity scores as weights to directly estimate relative risk and its confidence intervals},
  author={Xu, Stanley and Ross, Colleen and Raebel, Marsha A and Shetterly, Susan and Blanchette, Christopher and Smith, David},
  journal={Value in Health},
  volume={13},
  number={2},
  pages={273--277},
  year={2010},
  publisher={Wiley Online Library}
}

@article{chesnaye2022introduction,
  title={An introduction to inverse probability of treatment weighting in observational research},
  author={Chesnaye, Nicholas C and Stel, Vianda S and Tripepi, Giovanni and Dekker, Friedo W and Fu, Edouard L and Zoccali, Carmine and Jager, Kitty J},
  journal={Clinical kidney journal},
  volume={15},
  number={1},
  pages={14--20},
  year={2022},
  publisher={Oxford University Press}
}

@article{austin2011introduction,
  title={An introduction to propensity score methods for reducing the effects of confounding in observational studies},
  author={Austin, Peter C},
  journal={Multivariate behavioral research},
  volume={46},
  number={3},
  pages={399--424},
  year={2011},
  publisher={Taylor \& Francis}
}

@article{stuart2010matching,
  title={Matching methods for causal inference: A review and a look forward},
  author={Stuart, Elizabeth A},
  journal={Statistical science: a review journal of the Institute of Mathematical Statistics},
  volume={25},
  number={1},
  pages={1},
  year={2010}
}

@article{mahmood2014weighted,
  title={Weighted importance sampling for off-policy learning with linear function approximation},
  author={Mahmood, A Rupam and Van Hasselt, Hado P and Sutton, Richard S},
  journal={Advances in neural information processing systems},
  volume={27},
  year={2014}
}

@article{petersen2012diagnosing,
  title={Diagnosing and responding to violations in the positivity assumption},
  author={Petersen, Maya L and Porter, Kristin E and Gruber, Susan and Wang, Yue and Van Der Laan, Mark J},
  journal={Statistical methods in medical research},
  volume={21},
  number={1},
  pages={31--54},
  year={2012},
  publisher={SAGE Publications Sage UK: London, England}
}

@article{chiu2023evaluating,
  title={Evaluating model specification when using the parametric g-formula in the presence of censoring},
  author={Chiu, Yu-Han and Wen, Lan and McGrath, Sean and Logan, Roger and Dahabreh, Issa J and Hern{\'a}n, Miguel A},
  journal={American journal of epidemiology},
  volume={192},
  number={11},
  pages={1887--1895},
  year={2023},
  publisher={Oxford University Press}
}

@article{mcgrath2020gformula,
  title={gfoRmula: an R package for estimating the effects of sustained treatment strategies via the parametric g-formula},
  author={McGrath, Sean and Lin, Victoria and Zhang, Zilu and Petito, Lucia C and Logan, Roger W and Hern{\'a}n, Miguel A and Young, Jessica G},
  journal={Patterns},
  volume={1},
  number={3},
  year={2020},
  publisher={Elsevier}
}

@inproceedings{oprescu2025gst,
  title={GST-UNet: A Neural Framework for Spatiotemporal Causal Inference with Time-Varying Confounding},
  author={Oprescu, Miruna and Park, David Keetae and Luo, Xihaier and Yoo, Shinjae and Kallus, Nathan},
  booktitle={The Thirty-ninth Annual Conference on Neural Information Processing Systems},
  year={2025}
}

@article{liu2025bayesian,
  title={A Bayesian Approach to the G-Formula via Iterative Conditional Regression},
  author={Liu, Ruyi and Hu, Liangyuan and Perry Wilson, Francis and Warren, Joshua L and Li, Fan},
  journal={Statistics in Medicine},
  volume={44},
  number={13-14},
  pages={e70123},
  year={2025},
  publisher={Wiley Online Library}
}

@article{rein2024deep,
  title={Deep Learning Methods for the Noniterative Conditional Expectation G-Formula for Causal Inference from Complex Observational Data},
  author={Rein, Sophia M and Li, Jing and Hernan, Miguel and Beam, Andrew},
  journal={arXiv preprint arXiv:2410.21531},
  year={2024}
}

@inproceedings{thomas2016data,
  title={Data-efficient off-policy policy evaluation for reinforcement learning},
  author={Thomas, Philip and Brunskill, Emma},
  booktitle={International conference on machine learning},
  pages={2139--2148},
  year={2016},
  organization={PMLR}
}

@article{xie2019towards,
  title={Towards optimal off-policy evaluation for reinforcement learning with marginalized importance sampling},
  author={Xie, Tengyang and Ma, Yifei and Wang, Yu-Xiang},
  journal={Advances in neural information processing systems},
  volume={32},
  year={2019}
}

@inproceedings{hanna2019importance,
  title={Importance sampling policy evaluation with an estimated behavior policy},
  author={Hanna, Josiah and Niekum, Scott and Stone, Peter},
  booktitle={International Conference on Machine Learning},
  pages={2605--2613},
  year={2019},
  organization={PMLR}
}

@article{schlegel2019importance,
  title={Importance resampling for off-policy prediction},
  author={Schlegel, Matthew and Chung, Wesley and Graves, Daniel and Qian, Jian and White, Martha},
  journal={Advances in Neural Information Processing Systems},
  volume={32},
  year={2019}
}

@inproceedings{bossens2024low,
  title={Low variance off-policy evaluation with state-based importance sampling},
  author={Bossens, David M and Thomas, Philip S},
  booktitle={2024 IEEE Conference on Artificial Intelligence (CAI)},
  pages={871--883},
  year={2024},
  organization={IEEE}
}

@article{liu2018breaking,
  title={Breaking the curse of horizon: Infinite-horizon off-policy estimation},
  author={Liu, Qiang and Li, Lihong and Tang, Ziyang and Zhou, Dengyong},
  journal={Advances in neural information processing systems},
  volume={31},
  year={2018}
}

@inproceedings{shen2021state,
  title={State relevance for off-policy evaluation},
  author={Shen, Simon P and Ma, Yecheng and Gottesman, Omer and Doshi-Velez, Finale},
  booktitle={International Conference on Machine Learning},
  pages={9537--9546},
  year={2021},
  organization={PMLR}
}

@inproceedings{fujimoto2021deep,
  title={A deep reinforcement learning approach to marginalized importance sampling with the successor representation},
  author={Fujimoto, Scott and Meger, David and Precup, Doina},
  booktitle={International Conference on Machine Learning},
  pages={3518--3529},
  year={2021},
  organization={PMLR}
}

@inproceedings{le2019batch,
  title={Batch policy learning under constraints},
  author={Le, Hoang and Voloshin, Cameron and Yue, Yisong},
  booktitle={International Conference on Machine Learning},
  pages={3703--3712},
  year={2019},
  organization={PMLR}
}

@inproceedings{duan2020minimax,
  title={Minimax-optimal off-policy evaluation with linear function approximation},
  author={Duan, Yaqi and Jia, Zeyu and Wang, Mengdi},
  booktitle={International Conference on Machine Learning},
  pages={2701--2709},
  year={2020},
  organization={PMLR}
}

@inproceedings{voloshin1empirical,
  title={Empirical Study of Off-Policy Policy Evaluation for Reinforcement Learning},
  author={Voloshin, Cameron and Le, Hoang Minh and Jiang, Nan and Yue, Yisong},
  booktitle={Thirty-fifth Conference on Neural Information Processing Systems Datasets and Benchmarks Track (Round 1)}
}

@InProceedings{pmlr-v48-jiang16,
  title = 	 {Doubly Robust Off-policy Value Evaluation for Reinforcement Learning},
  author = 	 {Jiang, Nan and Li, Lihong},
  booktitle = 	 {Proceedings of The 33rd International Conference on Machine Learning},
  pages = 	 {652--661},
  year = 	 {2016},
  editor = 	 {Balcan, Maria Florina and Weinberger, Kilian Q.},
  volume = 	 {48},
  series = 	 {Proceedings of Machine Learning Research},
  address = 	 {New York, New York, USA},
  month = 	 {20--22 Jun},
  publisher =    {PMLR},
  url = 	 {https://proceedings.mlr.press/v48/jiang16.html}
}

@InProceedings{pmlr-v48-thomasa16,
  title = 	 {Data-Efficient Off-Policy Policy Evaluation for Reinforcement Learning},
  author = 	 {Thomas, Philip and Brunskill, Emma},
  booktitle = 	 {Proceedings of The 33rd International Conference on Machine Learning},
  pages = 	 {2139--2148},
  year = 	 {2016},
  editor = 	 {Balcan, Maria Florina and Weinberger, Kilian Q.},
  volume = 	 {48},
  series = 	 {Proceedings of Machine Learning Research},
  address = 	 {New York, New York, USA},
  month = 	 {20--22 Jun},
  publisher =    {PMLR},
  url = 	 {https://proceedings.mlr.press/v48/thomasa16.html}
}

@article{JMLR:v21:19-827,
  author  = {Nathan Kallus and Masatoshi Uehara},
  title   = {Double Reinforcement Learning for Efficient Off-Policy Evaluation in Markov Decision Processes},
  journal = {Journal of Machine Learning Research},
  year    = {2020},
  volume  = {21},
  number  = {167},
  pages   = {1--63},
  url     = {http://jmlr.org/papers/v21/19-827.html}
}

@inproceedings{dudik2011doubly,
  title={Doubly robust policy evaluation and learning},
  author={Dud{\'\i}k, Miroslav and Langford, John and Li, Lihong},
  booktitle={Proceedings of the 28th International Conference on International Conference on Machine Learning},
  pages={1097--1104},
  year={2011}
}

@article{lin2026optimal,
  title={Optimal Adjustment Sets for Nonparametric Estimation of Weighted Controlled Direct Effect},
  author={Lin, Ruiyang and Guo, Yongyi and Gan, Kyra},
  journal={Advances in Neural Information Processing Systems},
  volume={38},
  pages={93167--93220},
  year={2026}
}

@inproceedings{
li2025targeted,
title={Targeted Maximum Likelihood Learning: An Optimization Perspective},
author={Diyang Li and Kyra Gan},
booktitle={The Thirty-ninth Annual Conference on Neural Information Processing Systems},
year={2025},
url={https://openreview.net/forum?id=n63KgrgVHG}
}

@inproceedings{Wang_2020, series={ACM CHIL ’20},
   title={MIMIC-Extract: a data extraction, preprocessing, and representation pipeline for MIMIC-III},
   url={http://dx.doi.org/10.1145/3368555.3384469},
   DOI={10.1145/3368555.3384469},
   booktitle={Proceedings of the ACM Conference on Health, Inference, and Learning},
   publisher={ACM},
   author={Wang, Shirly and McDermott, Matthew B. A. and Chauhan, Geeticka and Ghassemi, Marzyeh and Hughes, Michael C. and Naumann, Tristan},
   year={2020},
   month=apr, pages={222–235},
   collection={ACM CHIL ’20} }

@book{kosorok2008introduction,
  title={Introduction to empirical processes and semiparametric inference},
  author={Kosorok, Michael R},
  year={2008},
  publisher={Springer}
}

@article{williams2000using,
  title={Using the Nystr{\"o}m method to speed up kernel machines},
  author={Williams, Christopher and Seeger, Matthias},
  journal={Advances in neural information processing systems},
  volume={13},
  year={2000}
}

@article{rudi2017falkon,
  title={Falkon: An optimal large scale kernel method},
  author={Rudi, Alessandro and Carratino, Luigi and Rosasco, Lorenzo},
  journal={Advances in neural information processing systems},
  volume={30},
  year={2017}
}
